\documentclass{article} 
\usepackage{iclr2021_conference,times}


\usepackage{amsmath,amsfonts,bm}









\def\eqref#1{equation~\ref{#1}}









\def\1{\bm{1}}










\DeclareMathAlphabet{\mathsfit}{\encodingdefault}{\sfdefault}{m}{sl}
\SetMathAlphabet{\mathsfit}{bold}{\encodingdefault}{\sfdefault}{bx}{n}













\usepackage{hyperref}
\usepackage{url}
\usepackage[pdftex]{graphicx}
\usepackage{amsthm}

\newtheorem{theorem}{Theorem}
\newtheorem{lemma}{Lemma}

\usepackage{wrapfig}

\title{Meta-learning with negative learning rates}


\author{Alberto Bernacchia  \\
MediaTek Research\\
\texttt{alberto.bernacchia@mtkresearch.com} }

%

\iclrfinalcopy 

\begin{document}

\maketitle

\begin{abstract}

Deep learning models require a large amount of data to perform well. 
When data is scarce for a target task, we can transfer the knowledge gained by training on similar tasks to quickly learn the target. 
A successful approach is \emph{meta-learning}, or \emph{learning to learn} a distribution of tasks, where \emph{learning} is represented by an outer loop, and \emph{to learn} by an inner loop of gradient descent.
However, a number of recent empirical studies argue that the inner loop is unnecessary and more simple models work equally well or even better.
We study the performance of MAML as a function of the learning rate of the inner loop, where zero learning rate implies that there is no inner loop.
Using random matrix theory and exact solutions of linear models, we calculate an algebraic expression for the test loss of MAML applied to mixed linear regression and nonlinear regression with overparameterized models. 
Surprisingly, while the optimal learning rate for adaptation is positive, we find that the optimal learning rate for training is always negative, a setting that has never been considered before.
Therefore, not only does the performance increase by decreasing the learning rate to zero, as suggested by recent work, but it can be increased even further by decreasing the learning rate to negative values.  
These results help clarify under what circumstances meta-learning performs best. 
\end{abstract}

\section{Introduction}

Deep Learning models represent the state-of-the-art in several machine learning benchmarks (\cite{lecun_deep_2015}), and their performance does not seem to stop improving when adding more data and computing resources (\cite{rosenfeld_constructive_2020}, \cite{kaplan_scaling_2020}).
However, they require a large amount of data and compute to start with, which are often not available to practitioners.
The approach of \emph{fine-tuning} has proved very effective to address this limitation: pre-train a model on a source task, for which a large dataset is available, and use this model as the starting point for a quick additional training (fine-tuning) on the small dataset of the target task (\cite{pan_survey_2010}, \cite{donahue_decaf:_2014}, \cite{yosinski_how_2014}).
This approach is popular because pre-trained models are often made available by institutions that have the resources to train them.

In some circumstances, multiple source tasks are available, all of which have scarce data, as opposed to a single source task with abundant data.
This case is addressed by \emph{meta-learning}, in which a model gains experience over multiple source tasks and uses it to improve its learning of future target tasks.
The idea of meta-learning is inspired by the ability of humans to generalize across tasks, without having to train on any single task for long time.
A meta-learning problem is solved by a bi-level optimization procedure: an outer loop optimizes meta-parameters across tasks, while an inner loop optimizes parameters within each task (\cite{hospedales_meta-learning_2020}).

The idea of meta-learning has gained some popularity, but a few recent papers argue that a simple alternative to meta-learning is just good enough, in which the inner loop is removed entirely (\cite{chen_closer_2020}, \cite{tian_rethinking_2020}, \cite{dhillon_baseline_2020}, \cite{chen_new_2020}, \cite{raghu_rapid_2020}).
Other studies find the opposite (\cite{goldblum_unraveling_2020}, \cite{collins_why_2020}, \cite{gao_modeling_2020}).
It is hard to resolve the debate because there is little theory available to explain these findings.

In this work, using random matrix theory and exact solutions of linear models, we derive an algebraic expression of the average test loss of MAML, a simple and successful meta-learning algorithm (\cite{finn_model-agnostic_2017}), as a function of its hyperparameters.  
In particular, we study its performance as a function of the inner loop learning rate during meta-training.
Setting this learning rate to zero is equivalent to removing the inner loop, as advocated by recent work (\cite{chen_closer_2020}, \cite{tian_rethinking_2020}, \cite{dhillon_baseline_2020}, \cite{chen_new_2020}, \cite{raghu_rapid_2020}).
Surprisingly, we find that the optimal learning rate is negative, thus performance can be increased by reducing the learning rate below zero.
In particular, we find the following:

\begin{itemize}
    \item In the problem of mixed linear regression, we prove that the optimal learning rate is always negative in overparameterized models. The same result holds in underparameterized models provided that the optimal learning rate is small in absolute value. We validate the theory by running extensive experiments.
    
    \item We extend these results to the case of nonlinear regression and wide neural networks, in which the output can be approximated by a linear function of the parameters (\cite{jacot_neural_2018}, \cite{lee_wide_2019}). While in this case we cannot prove that the optimal learning rate is always negative, preliminary experiments suggest that the result holds in this case as well.
    
\end{itemize}

\section{Related work}

The field of meta-learning includes a broad range of problems and solutions, see \cite{hospedales_meta-learning_2020} for a recent review focusing on neural networks and deep learning.
In this context, meta-learning received increased attention in the past few years, several new benchmarks have been introduced, and a large number of algorithms and models have been proposed to solve them (\cite{vinyals_matching_2017}, \cite{bertinetto_meta-learning_2019}, \cite{triantafillou_meta-dataset_2020}).
Despite the surge in empirical work, theoretical work is still lagging behind.

Similar to our work, a few other studies used random matrix theory and exact solutions to calculate the average test loss for the problem of linear regression (\cite{advani_high-dimensional_2017}, \cite{hastie_surprises_2019}, \cite{nakkiran_more_2019}).
To our knowledge, our study is the first to apply this technique to the problem of meta-learning with multiple tasks.
Our results reduce to those of linear regression in the case of one single task. 
Furthermore, we are among the first to apply the framework of Neural Tangent Kernel (\cite{jacot_neural_2018}, \cite{lee_wide_2019}) to the problem of meta-learning (a few papers appeared after our submission: \cite{yang_feature_2020}, \cite{wang_global_2020-1}, \cite{zhou_meta-learning_2021}).

Similar to us, a few theoretical studies looked at the problem of mixed linear regression in the context of meta-learning. 
In \cite{denevi_learning_2018}, \cite{bai_how_2021}, a meta-parameter is used to bias the task-specific parameters through a regularization term.
\cite{kong_meta-learning_2020} looks at whether many tasks with small data can compensate for a lack of tasks with big data.
\cite{tripuraneni_provable_2020}, \cite{du_few-shot_2020} study the sample complexity of representation learning.
However, none of these studies look into the effect of learning rate on performance, which is our main focus.

In this work, we focus on MAML, a simple and successful meta-learning algorithm (\cite{finn_model-agnostic_2017}).
A few theoretical studies have investigated MAML, looking at: universality of the optimization algorithm (\cite{finn_meta-learning_2018}), bayesian inference interpretation (\cite{grant_recasting_2018}), proof of convergence (\cite{ji_multi-step_2020}), difference between convex and non-convex losses (\cite{saunshi_sample_2020}), global optimality (\cite{wang_global_2020}), effect of the inner loop (\cite{collins_why_2020}, \cite{gao_modeling_2020}).
Again, none of these studies look at the effect of the learning rate, the main subject of our work.
The theoretical work of \cite{khodak_adaptive_2019} connects the learning rate to task similarity, while the work of \cite{li_meta-sgd_2017} meta-learns the learning rate.

\section{Meta-learning and MAML}
\label{sec:MAMLvsFT}

In this work, we follow the notation of \cite{hospedales_meta-learning_2020} and we use MAML (\cite{finn_model-agnostic_2017}) as the meta-learning algorithm.
We assume the existence of a distribution of tasks $\tau$ and, for each task, a loss function $\mathcal{L}^{\tau}$ and a distribution of data points $\mathcal{D}^\tau=\left\{x^\tau,y^\tau\right\}$ with input $x^\tau$ and label $y^\tau$.
We assume that the loss function is the same for all tasks, $\mathcal{L}^{\tau}=\mathcal{L}$, but each task is characterized by a different distribution of the data.
The empirical meta-learning loss is evaluated on a sample of $m$ tasks, and a sample of $n_v$ validation data points for each task:
\begin{equation}
\label{Lmetaemp}
\mathcal{L}^{meta}\left(\boldsymbol\omega;\mathcal{D}_t,\mathcal{D}_v\right)=\frac{1}{mn_v}\sum_{i=1}^m\sum_{j=1}^{n_v}\mathcal{L}\left(\boldsymbol\theta(\boldsymbol\omega;\mathcal{D}_t^{(i)});x_j^{v(i)},y_j^{v(i)}\right)
\end{equation}
The training set $\mathcal{D}_t^{(i)}=\left\{x_j^{t(i)},y_j^{t(i)}\right\}_{j=1:n_t}$ and validation set $\mathcal{D}_v^{(i)}=\left\{x_j^{v(i)},y_j^{v(i)}\right\}_{j=1:n_v}$ are drawn independently from the same distribution in each task $i$.
The function $\boldsymbol\theta$ represents the adaptation of the meta-parameter $\boldsymbol\omega$, which is evaluated on the training set.
Different meta-learning algorithms correspond to a different choice of $\boldsymbol\theta$, we describe below the choice of MAML (Eq.\ref{inner}), the subject of this study.
During \emph{meta-training}, the loss of Eq.\ref{Lmetaemp} is optimized with respect to the meta-parameter $\boldsymbol\omega$, usually by stochastic gradient descent, starting from an initial point $\boldsymbol\omega_0$.
The optimum is denoted as $\boldsymbol\omega^\star(\mathcal{D}_t,\mathcal{D}_v)$.
This optimization is referred to as the \emph{outer loop}, while computation of $\boldsymbol\theta$ is referred to as the \emph{inner loop} of meta-learning.
During \emph{meta-testing}, a new (target) task is given and $\boldsymbol\theta$ adapts on a set $\mathcal{D}_r$ of $n_r$ target data points.
The final performance of the model is computed on test data $\mathcal{D}_s$ of the target task.
Therefore, the test loss is equal to
\begin{equation}
\label{testloss}
\mathcal{L}^{test}=\mathcal{L}^{meta}\left(\boldsymbol\omega^\star(\mathcal{D}_t,\mathcal{D}_v);\mathcal{D}_r,\mathcal{D}_s\right)
\end{equation}

In MAML, the inner loop corresponds to a few steps of gradient descent, with a given learning rate $\alpha_t$.
In this work we consider the simple case of a single gradient step:
\begin{equation}
\label{inner}
\boldsymbol\theta(\boldsymbol\omega;\mathcal{D}_t^{(i)})=\boldsymbol\omega-\frac{\alpha_t}{n_t}\sum_{j=1}^{n_t}\left.\frac{\partial\mathcal{L}}{\partial\theta}\right|_{\boldsymbol\omega;x_j^{t(i)},y_j^{t(i)}}
\end{equation}
If the learning rate $\alpha_t$ is zero, then parameters are not adapted during meta-training and $\boldsymbol\theta(\boldsymbol\omega)=\boldsymbol\omega$.
In that case, a single set of parameters in learned across all data and there is no inner loop.
However, it is important to note that a distinct learning rate $\alpha_r$ is used during meta-testing.
A setting similar to this has been advocated in a few recent studies (\cite{chen_closer_2020}, \cite{tian_rethinking_2020}, \cite{dhillon_baseline_2020}, \cite{chen_new_2020}, \cite{raghu_rapid_2020}).

\begin{wrapfigure}{r}{0.45\textwidth}
  \begin{center}
    \includegraphics[width=0.45\textwidth]{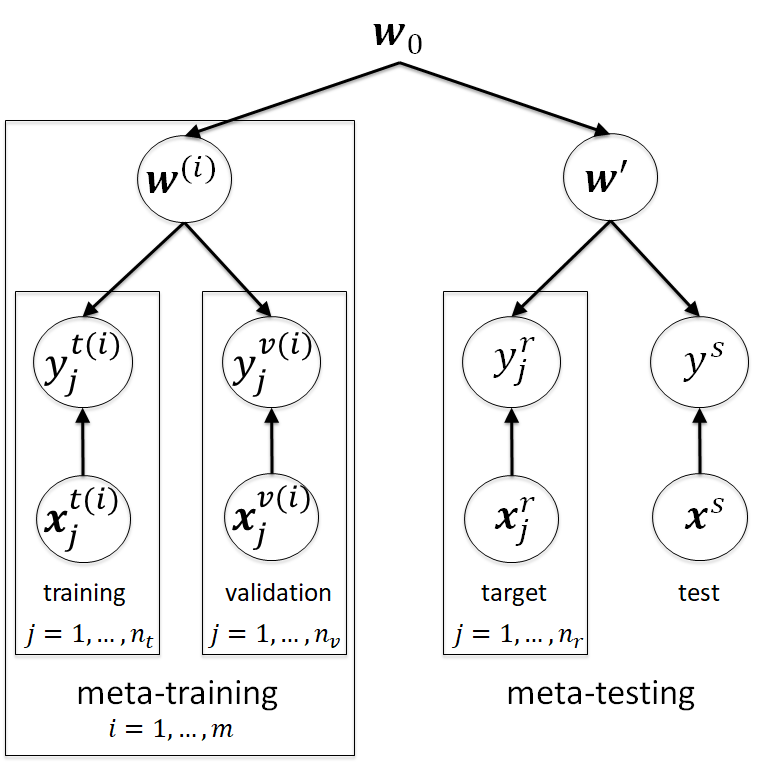}
  \end{center}
  \caption{Graphical model of data generation in mixed linear regression}
\label{graphmod}
\end{wrapfigure}

We show that, intuitively, the optimal learning rate at meta-testing (adaptation) time $\alpha_r$ is always positive.
Surprisingly, in the family of problems considered in this study, we find that the optimal learning rate during meta-training $\alpha_t$ is instead negative. 
We note that the setting $\alpha_t=0$ effectively does not use the $n_t$ training data points, therefore we could in principle add this data to the validation set, but we do not consider this option here since we are interested in a wide range of possible values of $\alpha_t$ as opposed to the specific case $\alpha_t=0$.

\section{Mixed linear regression}
\label{sec:genmod}

We study MAML applied to the problem of mixed linear regression.
Note that the goal here is not to solve the problem of mixed linear regression, but to probe the performance of MAML as a function of its hyperparameters.

In mixed linear regression, each task is characterized by a different linear function, and a model is evaluated by the mean squared error loss function.
We assume a generative model in the form of $y=\mathbf{x}^T\mathbf{w}+z$, where $\mathbf{x}$ is the input vector (of dimension $p$), $y$ is the output (scalar), $z$ is noise (scalar), and $\mathbf{w}$ is a vector of generating parameters (of dimension $p$), therefore $p$ represents both the number of parameters and the input dimension.
All distributions are assumed Gaussian:
\begin{equation} 
\mathbf{w}\sim\mathcal{N}\left(\mathbf{w}_0,\frac{\nu^2}{p}I_p\right)\;\;\;\;\;\;\;\;\;\;\mathbf{x}\sim\mathcal{N}\left(0,I_p\right)\;\;\;\;\;\;\;\;\;\;y|\mathbf{x},\mathbf{w}\sim\mathcal{N}\left(\mathbf{x}^T\mathbf{w},\sigma^2\right)
\end{equation}
%
%
%

where $I_p$ is the $p\times p$ identity matrix, $\sigma$ is the label noise, $\mathbf{w}_0$ is the task mean and $\nu$ represents the task variability. 
Different meta-training tasks $i$ correspond to different draws of generating parameters $\mathbf{w}^{(i)}$, while the parameters for the meta-testing task are denoted by $\mathbf{w'}$.
We denote by superscripts $t$, $v$, $r$, $s$ the training, validation, target and test data, respectively.
A graphical model of data generation is shown in Figure \ref{graphmod}.

Using random matrix theory and exact solutions of linear models, we calculate the test loss as a function of the following hyperparameters: the number of training tasks $m$, number of data points per task for training ($n_t$), validation ($n_v$) and target ($n_r$), learning rate for training $\alpha_t$ and for adaptation to target $\alpha_r$.
Furthermore, we have the hyperparameters specific to the mixed linear regression problem: $p$, $\nu$, $\sigma$, $\mathbf{w}_0$.
Since we use exact solutions to the linear problem, our approach is equivalent to running the outer loop optimization until convergence (see section \ref{sec:loss} in the Appendix for details).
We derive results in two cases: overparameterized $p> n_vm$ and underparameterized $p< n_vm$.

\section{Results}

\subsection{Overparameterized case}
\label{sec:op}

In the overparameterized case, the number of parameters $p$ is larger than the total number of validation data across tasks $n_vm$.
In this case, since the data does not fully constrain the parameters, the optimal value of $\boldsymbol\omega$ found during meta-training depends on the initial condition used for optimization, which we call $\boldsymbol\omega_0$.

\begin{theorem}
\label{overThm}
    Consider the algorithm of section \ref{sec:MAMLvsFT} (MAML one-step), and the data generating model of section \ref{sec:genmod} (mixed linear regression). Let $p > n_v m$. Let $p(\xi)$ and $n_t(\xi)$ be any function of order $O(\xi)$ as $\xi\rightarrow\infty$. Let $\left|\boldsymbol\omega_0-\mathbf{w}_0\right|$ be of order $O(\xi^{-1/4})$.
Then the test loss of Eq.\ref{testloss}, averaged over the entire data distribution (see Eq.\ref{losstest} in the Appendix) is equal to
    \begin{align}
    \label{finalloss_op}
        &\overline{\mathcal{L}}^{test}=\frac{\sigma^2}{2}\left(1+\frac{\alpha_r^2p}{n_r}\right)+ \nonumber \\ &+h^r\left[\frac{\nu^2}{2}\left(1+\frac{n_vm}{p}\right)+\frac{1}{2}\left(1-\frac{n_vm}{p}\right)\left|\boldsymbol\omega_0-\mathbf{w}_0\right|^2+\frac{\sigma^2n_vm}{2p}\frac{1+\frac{\alpha_t^2p}{n_t}}{h^t}\right]+O\left(\xi^{-3/2}\right)
    \end{align}
    where we define the following expressions
    \begin{align}
    \label{htsmall}
        h^t=\left(1-\alpha_t\right)^2+\alpha_t^2\frac{p+1}{n_t}
    \end{align}
    \begin{align}
    \label{hrsmall}
        h^r=\left(1-\alpha_r\right)^2+\alpha_r^2\frac{p+1}{n_r}
    \end{align}
\end{theorem}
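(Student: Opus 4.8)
The strategy is to unfold MAML into an explicit least-squares problem, reduce the averaged test loss to the single expectation $\E\|\boldsymbol\omega^\star-\mathbf{w}'\|^2$, and evaluate that expectation by combining exact symmetry/moment identities with random matrix concentration as $p,n_t\to\infty$ at rate $\xi$ with $n_v,m,n_r$ fixed. \emph{Unfolding and reduction:} with squared-error loss $\mathcal{L}(\boldsymbol\theta;x,y)=\tfrac12(x^T\boldsymbol\theta-y)^2$, the inner update of Eq.~\ref{inner} is affine in $\boldsymbol\omega$: writing $X^{(i)}_t,X^{(i)}_v$ for the training/validation design matrices of task $i$ and $A^{(i)}=(X^{(i)}_t)^TX^{(i)}_t$, one has $\boldsymbol\theta(\boldsymbol\omega;\mathcal{D}_t^{(i)})=M^{(i)}\boldsymbol\omega+\tfrac{\alpha_t}{n_t}(X^{(i)}_t)^Ty^{(i)}_t$ with $M^{(i)}=I_p-\tfrac{\alpha_t}{n_t}A^{(i)}$, so Eq.~\ref{Lmetaemp} becomes $\tfrac1{2mn_v}\|\tilde X\boldsymbol\omega-\tilde y\|^2$ where $\tilde X$ stacks $\tilde X^{(i)}=X^{(i)}_vM^{(i)}$ and $\tilde y$ stacks $\tilde y^{(i)}=\tilde X^{(i)}\mathbf{w}^{(i)}+\tilde z^{(i)}$ with effective noise $\tilde z^{(i)}=z^{(i)}_v-\tfrac{\alpha_t}{n_t}X^{(i)}_v(X^{(i)}_t)^Tz^{(i)}_t$. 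As $p>n_vm$, running the outer loop to convergence from $\boldsymbol\omega_0$ (equivalently using the exact solution, cf.\ section~\ref{sec:loss}) gives $\boldsymbol\omega^\star=\boldsymbol\omega_0+\tilde X^{+}(\tilde y-\tilde X\boldsymbol\omega_0)$ with $\tilde X^{+}=\tilde X^T(\tilde X\tilde X^T)^{-1}$ ($\tilde X$ full row rank a.s.). At meta-test time the same computation gives $\boldsymbol\theta'=M_r\boldsymbol\omega^\star+\tfrac{\alpha_r}{n_r}X_r^Ty_r$, $M_r=I_p-\tfrac{\alpha_r}{n_r}X_r^TX_r$; averaging $\mathcal{L}(\boldsymbol\theta';x_s,y_s)$ over the test point gives $\tfrac12(\sigma^2+\|\boldsymbol\theta'-\mathbf{w}'\|^2)$, averaging over the target noise peels off $\tfrac{\alpha_r^2\sigma^2p}{2n_r}$, and averaging over $X_r$ (independent of $\boldsymbol\omega^\star,\mathbf{w}'$) uses the second Wishart moment $\E[(X_r^TX_r)^2]=n_r(n_r+p+1)I_p$, i.e.\ $\E[M_r^2]=h^rI_p$, to yield
\[
\overline{\mathcal{L}}^{test}=\frac{\sigma^2}{2}\Big(1+\frac{\alpha_r^2p}{n_r}\Big)+\frac{h^r}{2}\,\E\|\boldsymbol\omega^\star-\mathbf{w}'\|^2 ;
\]
the same identity with $t$ subscripts produces $h^t$.

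\emph{The two pieces of $\E\|\boldsymbol\omega^\star-\mathbf{w}'\|^2$.} Since $\tilde X^{+}\tilde X\tilde X^{+}=\tilde X^{+}$, the cross term vanishes and with $P:=\tilde X^{+}\tilde X$
\[
\|\boldsymbol\omega^\star-\mathbf{w}'\|^2=\underbrace{\|(I-P)(\boldsymbol\omega_0-\mathbf{w}')\|^2}_{(A)}+\underbrace{(\tilde y-\tilde X\mathbf{w}')^T(\tilde X\tilde X^T)^{-1}(\tilde y-\tilde X\mathbf{w}')}_{(B)} .
\]
Term $(A)$ is exact: the law of the designs is invariant under $X^{(\cdot)}\mapsto X^{(\cdot)}Q^T$ for orthogonal $Q$, which conjugates $P$ by $Q$, so $\E[P]=\tfrac{n_vm}{p}I_p$; conditioning on $\tilde X$ and integrating $\mathbf{w}'\sim\mathcal{N}(\mathbf{w}_0,\tfrac{\nu^2}{p}I_p)$ gives $\E[(A)]=(1-\tfrac{n_vm}{p})(\nu^2+|\boldsymbol\omega_0-\mathbf{w}_0|^2)$. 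For $(B)$, condition on the designs and integrate out the task parameters (using $\E[(\mathbf{w}^{(i)}-\mathbf{w}')(\mathbf{w}^{(j)}-\mathbf{w}')^T]=\tfrac{\nu^2}{p}(1+\delta_{ij})I_p$) and the noises (using $\mathrm{Cov}(\tilde z^{(i)})=\sigma^2(I_{n_v}+\tfrac{\alpha_t^2}{n_t^2}X^{(i)}_vA^{(i)}(X^{(i)}_v)^T)$ and independence across tasks); this writes $(B)$ as traces of $(\tilde X\tilde X^T)^{-1}$ against block-structured matrices. The random matrix input is that, with $n_v,m$ fixed and $p,n_t\to\infty$, the $n_vm\times n_vm$ Gram matrix concentrates: $\E[\tilde X^{(i)}(\tilde X^{(i)})^T]=ph^tI_{n_v}$ (from $\E[(M^{(i)})^2]=h^tI_p$) and $\E[\tilde X^{(i)}(\tilde X^{(j)})^T]=0$ for $i\neq j$, with operator-norm fluctuations $O(\sqrt p)$, so $(\tilde X\tilde X^T)^{-1}=\tfrac1{ph^t}I_{n_vm}+O_{\mathrm{op}}(p^{-3/2})$. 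The leading term gives $\E[(B)]=\tfrac{2n_vm\nu^2}{p}+\tfrac{n_vm\sigma^2}{ph^t}\big(1+\tfrac{\alpha_t^2p}{n_t}\big)$. Inserting $\E[(A)],\E[(B)]$ into the display above and collecting orders — with $|\boldsymbol\omega_0-\mathbf{w}_0|=O(\xi^{-1/4})$ placing the $(A)$ contribution at order $\xi^{-1/2}$ — reproduces Eq.~\ref{finalloss_op}.

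\emph{Main obstacle.} The delicate point is making the substitution $(\tilde X\tilde X^T)^{-1}\to\tfrac1{ph^t}I$ quantitative to the stated $O(\xi^{-3/2})$ accuracy. This requires: (i) concentration of $\Tr(M^{(i)})^2$ and of the Gaussian quadratic forms in the rows of $X^{(i)}_v$ around their means, where each $M^{(i)}$ is a low-degree polynomial in the Wishart matrix $A^{(i)}$, hence moment and covariance bounds for $\Tr A^{(i)},\Tr(A^{(i)})^2,\Tr(A^{(i)})^3$; (ii) propagating these through the inverse by a Neumann expansion $(\tilde X\tilde X^T)^{-1}=\tfrac1{ph^t}I-\tfrac1{(ph^t)^2}E+\cdots$ and checking that the relevant contractions $\E[v^TEv]$ are $o(\xi^{3/2})$ — the off-block-diagonal part of $E$ is mean-zero and of lower order, and the diagonal correction cancels against the $ph^t$ subtraction; and (iii) a short argument that $\tilde X\tilde X^T$ is invertible and well-conditioned with overwhelming probability. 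The rest — the exact $\E[P]$ and Wishart-moment identities and the final collecting of terms — is routine.
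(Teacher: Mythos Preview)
Your proposal is correct and follows essentially the same route as the paper: reduce the averaged test loss to $\frac{\sigma^2}{2}(1+\frac{\alpha_r^2 p}{n_r})+\frac{h^r}{2}\bigl(\nu^2+\E\|\boldsymbol\omega^\star-\mathbf{w}_0\|^2\bigr)$ (you center at $\mathbf{w}'$ instead of $\mathbf{w}_0$, which differs only by the additive $\nu^2$), split via the row-space projection $P=\tilde X^+\tilde X$, and evaluate the residual piece by replacing $(\tilde X\tilde X^T)^{-1}$ with $(ph^t)^{-1}I_{n_vm}$ using concentration of the block Gram matrix --- precisely the content of the paper's Lemma~\ref{lemma2} and its Neumann/Taylor expansion. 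Your use of the joint rotational invariance $X^{(\cdot)}\mapsto X^{(\cdot)}Q^T$ to get $\E[P]=\tfrac{n_vm}{p}I_p$ \emph{exactly} is a small refinement (the paper instead approximates $B(\boldsymbol\omega_0-\mathbf{w}_0)(\boldsymbol\omega_0-\mathbf{w}_0)^TB^T$ via Lemma~\ref{lemma2}, which is why it imposes $|\boldsymbol\omega_0-\mathbf{w}_0|=O(\xi^{-1/4})$), but otherwise the two arguments coincide.
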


\begin{proof}
    The proof of this Theorem can be found in the Appendix, sections \ref{sec:derivation}, \ref{op_case}.
\end{proof}

The loss always increases with the output noise $\sigma$ and task variability $\nu$.
Overfitting is expressed in Eq.\ref{finalloss_op} by the term $\left|\boldsymbol\omega_0-\mathbf{w}_0\right|$, the distance between the initial condition for the optimization of $\boldsymbol\omega_0$ and the ground truth mean of the generating model $\mathbf{w}_0$.
Adding more validation data $n_v$ and tasks $m$ may increase or decrease the loss depending on the size of this term relative to the noise (\cite{nakkiran_more_2019}), as it does reducing the number of parameters $p$.
However, the loss always decreases with the number of data points for the target task $n_r$, as that data only affects the adaptation step.

Our main focus is studying how the loss is affected by the learning rates, during training $\alpha_t$ and adaptation $\alpha_r$.
The loss is a quadratic and convex function of $\alpha_r$, therefore it has a unique minimum.
While it is possible to compute the optimal value of $\alpha_r$ from Eq.\ref{finalloss_op}, here we just note that the loss is a sum of two quadratic functions, one has a minimum at $\alpha_r=0$ and another has a minimum at $\alpha_r=1/\left(1+(p+1)/n_r\right)$, therefore the optimal learning rate is in between the two values and is always positive.
This is intuitive, since a positive learning rate for adaptation implies that the parameters get closer to the optimum for the target task.
An example of the loss as a function of the adaptation learning rate $\alpha_r$ is shown in Figure \ref{fig: lineartheory_op}a, where we also show the results of experiments in which we run MAML empirically.
The good agreement between theory and experiment suggest that Eq.\ref{finalloss_op} is accurate.

However, the training learning rate $\alpha_t$ shows the opposite: by taking the derivative of Eq.\ref{finalloss_op} with respect to $\alpha_t$, it is possible to show that it has a unique absolute minimum for a negative value of $\alpha_t$.
This can be proved by noting that this function has the same finite value for large positive or negative $\alpha_t$, its derivative is always positive at $\alpha_t=0$, and it has one minimum ($-$) and one maximum ($+$) at values 
\begin{align}
\alpha_t^\pm=-\frac{n_t+1}{2p}\pm\sqrt{\left(\frac{n_t+1}{2p}\right)^2+\frac{n_t}{p}}
\end{align}
Note that the argmax $\alpha_t^+$ is always positive, while the argmin $\alpha_t^-$ is always negative.
This result is counter-intuitive, since a negative learning rate pushes parameters towards higher values of the loss.
However, learning of the meta-parameter $\boldsymbol\omega$ is performed by the outer loop (minimize Eq.\ref{Lmetaemp}), for which there is no learning rate since we are using the exact solution to the linear problem and thus we are effectively training to convergence.
Therefore, it remains unclear whether the inner loop (Eq.\ref{inner}) should push parameters towards higher or lower values of the loss.
An example of the loss as a function of the training learning rate $\alpha_r$ is shown in Figure \ref{fig: lineartheory_op}b, where we also show the results of experiments in which we run MAML empirically.
Here the theory slightly underestimate the experimental loss, but the overall shapes of the curves are in good agreement, suggesting that Eq.\ref{finalloss_op} is accurate.
Additional experiments are shown in the Appendix, Figure \ref{fig: transition}.

\begin{figure}[ht]
\begin{center}
\centerline{\includegraphics[width=0.9\textwidth]{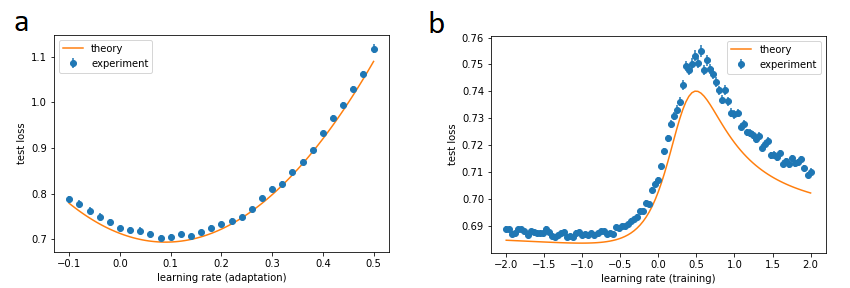}}
\caption{Average test loss of MAML as a function of the learning rate, on overparameterized mixed linear regression, as predicted by our theory and confirmed in experiments. a) Effect of learning rate $\alpha_r$ during adaptation. b) Effect of learning rate $\alpha_t$ during training. The optimal learning rate during adaptation is positive, while that during training is negative. Values of parameters: $n_t=30, n_v=2, n_r=20, m=3, p=60, \sigma=1., \nu=0.5$, $\boldsymbol\omega_0=\mathbf{0}$, $\mathbf{w}_0=\mathbf{0}$. In panel a) we set $\alpha_t=0.2$, in panel b) we set $\alpha_r=0.2$. In the experiments, each run is evaluated on $100$ test tasks of $50$ data points each, and each point is an average over $100$ runs (a) or $1000$ runs (b).}
\label{fig: lineartheory_op}
\end{center}
\end{figure}

\subsection{Underparameterized case}
\label{sec:up}

In the underparameterized case, the number of parameters $p$ is smaller than the total number of validation data across tasks $n_v m$.
In this case, since the data fully constrains the parameters, the optimal value of $\boldsymbol\omega$ found during meta-training is unique. We prove the following result.

\begin{theorem} \label{underThm}
    Consider the algorithm of section \ref{sec:MAMLvsFT} (MAML one-step), and the data generating model of section \ref{sec:genmod} (mixed linear regression). Let $p < n_v m$. Let $n_v(\xi)$ and $n_t(\xi)$ be any function of order $O(\xi)$. For $\xi, m\rightarrow\infty$, the test loss of Eq.\ref{testloss}, averaged over the entire data distribution (see Eq.\ref{losstest} in the Appendix) is equal to

    \begin{multline}\label{finalloss_up}
        \overline{\mathcal{L}}^{test}=\frac{\sigma^2}{2}\left(1+\frac{\alpha_r^2p}{n_r}\right)+\frac{h^r\nu^2}{2}+\\
        +\frac{h^r}{2{h^t}^2}\frac{p}{n_vm}\left\{\sigma^2\left[h^t+\frac{\alpha_t^2}{n_t}\left[\left(n_v+1\right)g_1+pg_2\right]\right]+\frac{\nu^2}{p}\left[\left(n_v+1\right)g_3+pg_4\right]\right\}+O\left((m\xi)^{-3/2}\right)
    \end{multline}
    where $h^r, h^t$ are defined as in previous section, Eqs.\ref{htsmall}, \ref{hrsmall}, and $g_i$ are order $O(1)$ polynomials in $\alpha_t$, see Eqs.\ref{gpoly1}-\ref{gpoly4} in the Appendix.
\end{theorem}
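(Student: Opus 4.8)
The plan is to follow the same closed-form route as in the overparameterized case and then carry out the disorder average with Wishart moment identities, the new ingredient being that $p<n_vm$ makes the meta-training solution \emph{unique}. Since the per-example squared loss has a linear gradient, the inner step of Eq.\ref{inner} is affine in $\boldsymbol\omega$: for a generic task, writing $\hat\Sigma_t=\tfrac{1}{n_t}\sum_j x^t_j(x^t_j)^T$ (and $\hat\Sigma_v,\hat\Sigma_r$ for the corresponding validation- and target-data covariances), $M=I_p-\alpha_t\hat\Sigma_t$ and $\hat\xi_t=\tfrac{1}{n_t}\sum_j x^t_jz^t_j$, the adapted parameter is $\boldsymbol\theta=\mathbf{w}+M(\boldsymbol\omega-\mathbf{w})+\alpha_t\hat\xi_t$; a superscript $(i)$ labels the $i$-th meta-training task. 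Substituting into Eq.\ref{Lmetaemp} turns $\mathcal{L}^{meta}$ into a convex quadratic in $\boldsymbol\omega$ with Hessian $A=\tfrac{1}{m}\sum_i M_i\hat\Sigma_v^{(i)}M_i$, which is a.s.\ invertible since $p<n_vm$, so training the outer loop to convergence returns the unique $\boldsymbol\omega^\star$. Pushing $\boldsymbol\omega^\star$ through the meta-testing adaptation (again affine, with rate $\alpha_r$) and integrating out the test data and all label noise reduces Eq.\ref{testloss} to $\overline{\mathcal{L}}^{test}=\tfrac{\sigma^2}{2}\big(1+\alpha_r^2p/n_r\big)+\tfrac{h^r}{2}\,\E|\boldsymbol\omega^\star-\mathbf{w}'|^2$, using $\E[(I_p-\alpha_r\hat\Sigma_r)^2]=h^rI_p$ and $\E\,\big|\tfrac{\alpha_r}{n_r}\sum_j x^r_jz^r_j\big|^2=\alpha_r^2\sigma^2p/n_r$; and since $\mathbf{w}'\sim\mathcal{N}(\mathbf{w}_0,\tfrac{\nu^2}{p}I_p)$ is independent of the meta-training data, $\E|\boldsymbol\omega^\star-\mathbf{w}'|^2=\nu^2+\E|\boldsymbol\omega^\star-\mathbf{w}_0|^2$. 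This already yields the first two terms of Eq.\ref{finalloss_up}, and the whole problem reduces to evaluating $\E|\boldsymbol\omega^\star-\mathbf{w}_0|^2$.

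Next I would solve the normal equations. Writing $\mathbf{w}^{(i)}=\mathbf{w}_0+\delta^{(i)}$ with $\delta^{(i)}\sim\mathcal{N}(0,\tfrac{\nu^2}{p}I_p)$ and $\hat b_t^{(i)}=\hat\Sigma_t^{(i)}\mathbf{w}^{(i)}+\hat\xi_t^{(i)}$, one gets $\boldsymbol\omega^\star-\mathbf{w}_0=A^{-1}B$ with $B=\tfrac{1}{m}\sum_i M_i\big(\hat\Sigma_v^{(i)}M_i\delta^{(i)}-\alpha_t\hat\Sigma_v^{(i)}\hat\xi_t^{(i)}+\hat\xi_v^{(i)}\big)$, where $\hat\xi_v^{(i)}=\tfrac{1}{n_v}\sum_j x^{v(i)}_jz^{v(i)}_j$. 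The three contributions to $B$ are mutually independent, independent across tasks, and conditionally mean zero given the sample covariances $\{\hat\Sigma_t^{(i)},\hat\Sigma_v^{(i)}\}$ that determine $A$; hence the $i\ne i'$ terms of $\E[BB^T]$ vanish and $\E|\boldsymbol\omega^\star-\mathbf{w}_0|^2=\E_{\hat\Sigma}\,\Tr\!\big(A^{-2}\,\E[BB^T\mid\hat\Sigma]\big)$, where, using $\E[\delta\delta^T]=\tfrac{\nu^2}{p}I_p$, $\E[\hat\xi_t\hat\xi_t^T\mid\hat\Sigma_t]=\tfrac{\sigma^2}{n_t}\hat\Sigma_t$ and $\E[\hat\xi_v\hat\xi_v^T\mid\hat\Sigma_v]=\tfrac{\sigma^2}{n_v}\hat\Sigma_v$, the conditional covariance equals $\tfrac{1}{m^2}\sum_i M_i\big(\tfrac{\nu^2}{p}\hat\Sigma_v^{(i)}M_i^2\hat\Sigma_v^{(i)}+\tfrac{\alpha_t^2\sigma^2}{n_t}\hat\Sigma_v^{(i)}\hat\Sigma_t^{(i)}\hat\Sigma_v^{(i)}+\tfrac{\sigma^2}{n_v}\hat\Sigma_v^{(i)}\big)M_i$.

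The asymptotics then rest on two facts. First, $A$ is an average of $m$ i.i.d.\ matrices with $\E[A]=\E[M\hat\Sigma_vM]=\E[M^2]=h^tI_p$ exactly (validation data independent of training data, $\E[\hat\Sigma_v]=I_p$, $\E[\hat\Sigma_t^2]=(1+\tfrac{p+1}{n_t})I_p$), and since $h^t=(1-\alpha_t)^2+\alpha_t^2(p+1)/n_t>0$ always (Eq.\ref{htsmall}), for $n_vm\gg p$ the matrix $A$ is uniformly well-conditioned and $A^{-2}$ concentrates on ${h^t}^{-2}I_p$. Second, $m\,\E[BB^T\mid\hat\Sigma]$ is itself an i.i.d.\ average over tasks and concentrates on its single-task mean. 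To leading order this gives $\E|\boldsymbol\omega^\star-\mathbf{w}_0|^2=\tfrac{1}{m{h^t}^2}\,\E_{\mathrm{task}}\,\Tr\!\big(M^2\big(\tfrac{\nu^2}{p}\hat\Sigma_vM^2\hat\Sigma_v+\tfrac{\alpha_t^2\sigma^2}{n_t}\hat\Sigma_v\hat\Sigma_t\hat\Sigma_v+\tfrac{\sigma^2}{n_v}\hat\Sigma_v\big)\big)$, and each single-task trace is evaluated with the exact Gaussian fourth-moment identity $\E[\hat\Sigma_vY\hat\Sigma_v]=\tfrac{n_v+1}{n_v}Y+\tfrac{1}{n_v}\Tr(Y)\,I_p$: applied with $Y=M^2$ (resp.\ $Y$ built from $\hat\Sigma_t$) it is precisely what produces the $(n_v+1)$-versus-$p$ split of Eq.\ref{finalloss_up}, the ``$Y$'' piece giving the $(n_v+1)g_i$ and the ``$\Tr(Y)I_p$'' piece the $pg_i$, while the pure-noise term $\tfrac{\sigma^2}{n_v}\hat\Sigma_v$ contributes the bare $\sigma^2h^t$. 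The residual scalar factors require the lower Wishart trace moments $\E\Tr(M^4)$, $\E\Tr(\hat\Sigma_t^3)$ and covariances of linear eigenvalue statistics (needed because $M$ and $\hat\Sigma_t$ come from the same sample); these assemble into the $O(1)$ polynomials $g_1,\dots,g_4$, and all discarded fluctuation and finite-$p$ corrections are collected into $O((m\xi)^{-3/2})$.

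The main obstacle is the step just above: bounding the error incurred by replacing $A^{-2}$ with ${h^t}^{-2}I_p$ inside $\E_{\hat\Sigma}\Tr(A^{-2}\,\E[BB^T\mid\hat\Sigma])$. This requires a quantitative spectral-norm concentration of $A$ about $h^tI_p$ --- delicate because $M_i$ and $\hat\Sigma_t^{(i)}$ are built from the same data and are not independent --- together with a lower bound on $\lambda_{\min}(A)$, which the underparameterized condition $p<n_vm$ and $h^t>0$ supply, and an argument that the fluctuations of $A$ decorrelate from those of $\E[BB^T\mid\hat\Sigma]$ at the claimed order. By comparison, the Wishart trace-moment bookkeeping that pins down $g_1,\dots,g_4$ is routine: expand $M^2=I_p-2\alpha_t\hat\Sigma_t+\alpha_t^2\hat\Sigma_t^2$ and apply the moment identities term by term.
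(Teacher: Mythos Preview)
Your proposal is correct and follows essentially the same route as the paper: reduce the test loss to $\E|\boldsymbol\omega^\star-\mathbf{w}_0|^2$, write the underparameterized solution via the normal equations, replace the Hessian by its deterministic limit $h^tI_p$ via concentration (the ``main obstacle'' you flag is exactly what the paper's Lemma~3 handles, under the same scaling $n_v,n_t=O(\xi)$, $m\to\infty$), and evaluate the numerator with the Gaussian/Wishart moment identities of Lemma~1---your identity $\E[\hat\Sigma_vY\hat\Sigma_v]=\tfrac{n_v+1}{n_v}Y+\tfrac{1}{n_v}\Tr(Y)I_p$ is the paper's $\E[X^TXDX^TX]=n(n+1)D+n\Tr(D)I_p$, and the higher trace moments you need for $g_1,\dots,g_4$ are the $\mu_2,\dots,\mu_{2,2}$ listed there. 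The only difference is cosmetic: the paper stacks all tasks into one $(n_vm)\times p$ design matrix $B$ and vector $\boldsymbol\gamma$ and works with $\Tr\big[(B^TB)^{-1}B^T\Gamma B(B^TB)^{-1}\big]$, whereas you keep the per-task sample covariances $\hat\Sigma_t^{(i)},\hat\Sigma_v^{(i)}$ explicit and condition on them, but the two computations are identical.
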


\begin{proof}
    The proof of this Theorem can be found in the Appendix, sections \ref{sec:derivation}, \ref{up_case}.
\end{proof}

Again, the loss always increases with the output noise $\sigma$ and task variability $\nu$.
Furthermore, in this case the loss always decreases with the number of data points $n_v$, $n_r$, and tasks $m$.
Note that, for a very large number of tasks $m$, the loss does not depend on meta-training hyperparameters $\alpha_t$, $n_v$, $n_t$.
When the number of tasks is infinite, it doesn't matter whether we run the inner loop, and how much data we have for each task.

As in the overparameterized case, the loss is a quadratic and convex function of the adaptation learning rate $\alpha_r$, and there is a unique minimum.
While the value of the argmin is different, in this case as well the loss is a sum of two quadratic functions, one with minimum at $\alpha_r=0$ and another with a minimum at $\alpha_r=1/\left(1+(p+1)/n_r\right)$, therefore the optimal learning rate is again in between the same two values and is always positive.
Similar comments applies to this case: a positive learning rate for adaptation implies that the parameters get closer to the optimum for the target task.
An example of the loss as a function of the adaptation learning rate $\alpha_r$ is shown in Figure \ref{fig: lineartheory_up}a, where we also show the results of experiments in which we run MAML empirically.
The good agreement between theory and experiment suggest that Eq.\ref{finalloss_up} is accurate.

As a function of the training learning rate $\alpha_t$, the loss Eq.\ref{finalloss_up} is the ratio of two fourth order polynomials, therefore it is not straightforward to determine its behaviour.
However, it is possible to show that the following holds
\begin{align}
\left.\frac{\partial\overline{\mathcal{L}}^{test}}{\partial\alpha_t}\right|_{\alpha_t=0}=\frac{\sigma^2p}{n_vm}\geq 0
\end{align}
suggesting that performance is always better for negative values of $\alpha_t$ around zero.
Even if counter-intuitive, this finding aligns with that of previous section, and similar comments apply.
An example of the loss as a function of the training learning rate $\alpha_r$ is shown in Figure \ref{fig: lineartheory_up}b, where we also show the results of experiments in which we run MAML empirically.
A good agreement is observed between theory and experiment, again suggesting that Eq.\ref{finalloss_up} is accurate.
Additional experiments are shown in the Appendix, Figure \ref{fig: transition}.

\begin{figure}[ht]
\begin{center}
\centerline{\includegraphics[width=0.9\textwidth]{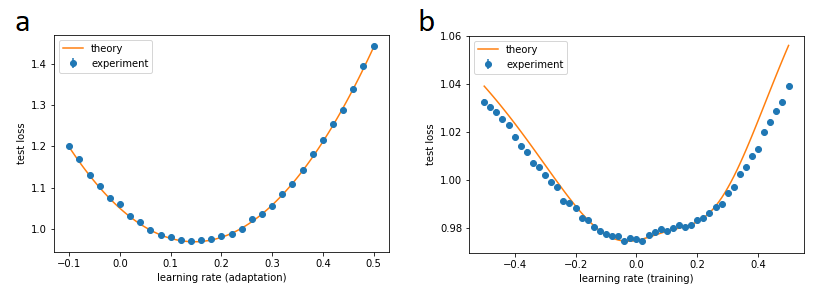}}
\caption{Average test loss as a function of the learning rate, on underparameterized mixed linear regression, as predicted by our theory and confirmed in experiments. a) Effect of learning rate $\alpha_r$ during testing. b) Effect of learning rate $\alpha_t$ during training. The optimal learning rate during testing is always positive, while that during training is negative. Values of parameters: $n_t=5, n_v=25, n_r=10, m=40, p=30, \sigma=0.2, \nu=0.2$. In panel a) we set $\alpha_t=0.2$, in panel b) we set $\alpha_r=0.2$. In the experiments, the model is evaluated on $100$ tasks of $50$ data points each, and each point is an average over $100$ (a) or 1000 (b) runs.}
\label{fig: lineartheory_up}
\end{center}
\end{figure}

\subsection{Non-Gaussian theory in overparameterized models}

In previous sections we studied the performance of MAML applied to the problem of mixed linear regression.
It remains unclear whether the results in the linear case are relevant for the more interesting case of nonlinear problems.
Inspired by recent theoretical work, we consider the case of nonlinear regression with squared loss
\begin{equation}
\label{lossnonlin}
\mathcal{L}\left(\boldsymbol\omega\right)=\mathop{\mathbb{E}}_{\mathbf{x}}\;\mathop{\mathbb{E}}_{y|\mathbf{x}}\frac{1}{2}\left[y-f\left(\mathbf{x},\boldsymbol\omega\right)\right]^2
\end{equation}
where $y$ is a target output and $f\left(\mathbf{x},\boldsymbol\omega\right)$ the output of a neural network with input $\mathbf{x}$ and parameters $\boldsymbol\omega$.
The introduction of the Neural Tangent Kernel showed that, in the limit of infinitely wide neural networks, the output is a linear function of its parameters during the entire course of training  (\cite{jacot_neural_2018}, \cite{lee_wide_2019}).
This is expressed by a first order Taylor expansion
\begin{equation}
\label{assumewide}
f\left(\mathbf{x},\boldsymbol\omega\right)\simeq f\left(\mathbf{x},\boldsymbol\omega_0\right)+\mathbf{k}\left(\mathbf{x},\boldsymbol\omega_0\right)^T\left(\boldsymbol\omega-\boldsymbol\omega_0\right)
\end{equation}
\begin{equation}
\label{kernel}
\mathbf{k}\left(\mathbf{x},\boldsymbol\omega_0\right)=\nabla_{\boldsymbol\omega}\left.f\left(\mathbf{x},\boldsymbol\omega\right)\right|_{\mathbf{x},\boldsymbol\omega_0}
\end{equation}
The parameters $\boldsymbol\omega$ remain close to the initial condition $\boldsymbol\omega_0$ during the entire course of training, a phenomenon referred to as \emph{lazy training} (\cite{chizat_lazy_2020}), and therefore the output can be linearized around $\boldsymbol\omega_0$.
Intuitively, in a model that is heavily overparameterized, the data does not constrain the parameters, and a parameter that minimizes the loss in Eq.\ref{lossnonlin} can be found in the vicinity of any initial condition $\boldsymbol\omega_0$.  
Note that, while the output of the neural network is linear in the parameters, it remains a nonlinear function of its input, through the vector of nonlinear functions $\mathbf{k}$ in Eq.\ref{kernel}.

By substituting Eq.\ref{assumewide} into Eq.\ref{lossnonlin}, the nonlinear regression becomes effectively linear, in the sense that the loss is a quadratic function of the parameters $\boldsymbol\omega$, and all nonlinearities are contained in the functions $\mathbf{k}$ in Eq.\ref{kernel}, that are fixed by the initial condition $\boldsymbol\omega_0$.
This suggests that we can carry over the theory developed in the previous section to this problem.
However, in this case the input to the linear regression problem is effectively $\mathbf{k}\left(\mathbf{x}\right)$, and some of the assumptions made in the previous section are not acceptable.
In particular, even if we assume that $\mathbf{x}$ is Gaussian, $\mathbf{k}\left(\mathbf{x}\right)$ is a nonlinear function of $\mathbf{x}$ and cannot be assumed Gaussian.
We prove the following result, where we generalize the result of section \ref{sec:op} to non-Gaussian inputs and weights.

\begin{theorem}
\label{theoNG}
    Consider the algorithm of section \ref{sec:MAMLvsFT} (MAML one-step), with $\boldsymbol\omega_0=\mathbf{0}$, and the data generating model of section \ref{sec:genmod}, where the input $\mathbf{x}$ and the weights $\mathbf{w}$ are not necessarily Gaussian, and have zero mean and covariances, respectively, $\Sigma=\mathbb{E}\mathbf{x}\mathbf{x}^T$ and $\Sigma_w=\mathbb{E}\mathbf{w}\mathbf{w}^T$.
    Let $F$ be the matrix of fourth order moments $F=\mathbb{E}\left(\mathbf{x}^T\Sigma\mathbf{x}\right)\mathbf{x}\mathbf{x}^T$.
    Let $p > n_v m$.  Let $p(\xi)$ and $n_t(\xi)$ be any function of order $O(\xi)$ as $\xi\rightarrow\infty$. Let $\mbox{Tr}\left(\Sigma_w^2\right)$ be of order $O\left(\xi^{-1}\right)$, and let the variances of matrix products of the rescaled inputs $\mathbf{x}/\sqrt{p}$, up to sixth order, be of order $O\left(\xi^{-1}\right)$ (see Eqs.\ref{corrass1}-\ref{corrass3} in the Appendix).
    Then the test loss of Eq.\ref{testloss}, averaged over the entire data distribution (see Eq.\ref{losstest} in the Appendix) is equal to
    \begin{multline}
    \label{lossnongauss}
    \overline{\mathcal{L}}^{test}=\frac{1}{2}\mbox{Tr}\left(\Sigma_wH^r\right)+\frac{\sigma^2}{2}\left[1+\frac{\alpha_r^2}{n_r}\mbox{Tr}\left(\Sigma^2\right)\right]+\\
    +\frac{1}{2}n_vm\frac{\mbox{Tr}\left(H^rH^t\right)\left\{\mbox{Tr}\left(\Sigma_wH^t\right)+\sigma^2\left[1+\frac{\alpha_t^2}{n_t}\mbox{Tr}\left(\Sigma^2\right)\right]\right\}}{\mbox{Tr}\left(H^t\right)^2}+O\left(\xi^{-3/2}\right)
    \end{multline}
    where we define the following matrices
    \begin{equation}
    H^t=\left[\Sigma\left(I-\alpha_t\Sigma\right)^2+\frac{\alpha_t^2}{n_t}\left(F-\Sigma^3\right)\right]
    \end{equation}
    \begin{equation}
    H^r=\left[\Sigma\left(I-\alpha_r\Sigma\right)^2+\frac{\alpha_r^2}{n_r}\left(F-\Sigma^3\right)\right]
    \end{equation}
\end{theorem}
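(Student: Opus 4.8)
The plan is to reduce Theorem~\ref{theoNG} to a matrix-valued generalization of the computation behind Theorem~\ref{overThm}, carried out in the Appendix sections \ref{sec:derivation}, \ref{op_case}. First I would write out the exact (finite-sample) test loss from Eq.\ref{losstest} as an expectation over the meta-training data $\{\mathbf{x}^{t(i)},\mathbf{x}^{v(i)}\}$, the meta-testing data $\{\mathbf{x}^r,\mathbf{x}^s\}$, the task parameters $\mathbf{w}^{(i)},\mathbf{w}'$, and all noise variables. As in the Gaussian case, the outer loop is solved in closed form: with $p>n_vm$ the meta-training loss Eq.\ref{Lmetaemp} is minimized on an affine subspace, so $\boldsymbol\omega^\star$ is the projection of $\boldsymbol\omega_0=\mathbf{0}$ onto the solution set, i.e.\ a minimum-norm least-squares solution built from the adapted design matrices $\mathbf{X}^{v(i)}(I-\tfrac{\alpha_t}{n_t}\mathbf{X}^{t(i)\top}\mathbf{X}^{t(i)})$. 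Substituting the linearized adaptation rule Eq.\ref{inner} and then the test-time adaptation with rate $\alpha_r$ gives a quadratic form in $\mathbf{w}^{(i)}$, $\mathbf{w}'$, $\mathbf{x}^r$, $\mathbf{x}^s$ whose coefficient matrices are rational functions of the empirical second-moment matrices $\tfrac{1}{n}\mathbf{X}^\top\mathbf{X}$.

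Next I would take expectations in stages. The noise and label-direction expectations are exact and produce the $\sigma^2$ and $\Sigma_w$ terms. The remaining quantities are traces of products of the random matrices $S^{t(i)}=\tfrac1{n_t}\mathbf{X}^{t(i)\top}\mathbf{X}^{t(i)}$, $S^{v(i)}=\tfrac1{n_v}\mathbf{X}^{v(i)\top}\mathbf{X}^{v(i)}$, etc. Here the key step is concentration: under the stated hypotheses---$\mathrm{Tr}(\Sigma_w^2)=O(\xi^{-1})$ and the variances of matrix products of the rescaled inputs $\mathbf{x}/\sqrt p$ up to sixth order being $O(\xi^{-1})$ (Eqs.\ref{corrass1}-\ref{corrass3})---each such random matrix concentrates around its mean, and products/inverses of them concentrate around the corresponding products/inverses of means, with the fluctuation terms contributing only at order $O(\xi^{-3/2})$. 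Concretely, $\mathbb{E}\,S^{t(i)}=\Sigma$, and $\mathbb{E}\big[(I-\alpha_t S^{t(i)})\Sigma(I-\alpha_t S^{t(i)})\big]$, after expanding and using $\mathbb{E}(\mathbf{x}^\top\Sigma\mathbf{x})\mathbf{x}\mathbf{x}^\top = F$, produces exactly $H^t=\Sigma(I-\alpha_t\Sigma)^2+\tfrac{\alpha_t^2}{n_t}(F-\Sigma^3)$, and similarly $H^r$. The denominator $\mathrm{Tr}(H^t)^2$ arises from the minimum-norm inverse $\big(\sum_i S^{v(i)}(\cdots)\big)^{+}$, whose expectation in the heavily overparameterized regime is governed by a Marchenko--Pastur-type self-consistent equation that, to leading order, collapses to division by $\mathrm{Tr}(H^t)$ because the relevant Wishart-like matrix has aspect ratio tending to zero; I would make this precise with the resolvent/Stieltjes-transform argument already used in section \ref{sec:derivation}, now applied coordinate-free (trace identities rather than eigenvalue sums). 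Collecting the surviving terms gives Eq.\ref{lossnongauss}.

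The main obstacle I expect is exactly this last point: in the Gaussian case the matrices $S$ are Wishart, so their moments and the moments of their inverses are known in closed form and the random-matrix limit is textbook; here I only have moment bounds, so I must show that the non-Gaussian fluctuations of $S^{t(i)}$, of the adapted matrix $(I-\alpha_t S^{t(i)})$, and especially of the pseudoinverse appearing in $\boldsymbol\omega^\star$, are controlled well enough that replacing every random matrix by its mean incurs only $O(\xi^{-3/2})$ error. The sixth-order variance hypotheses (Eqs.\ref{corrass1}-\ref{corrass3}) are tailored precisely so that the worst cross terms---products of three fluctuating $S$-type matrices traced against $\Sigma$ or $\Sigma_w$---have variance $O(\xi^{-3})$ and hence, by Chebyshev, are negligible; verifying that the bookkeeping of which moments are needed matches exactly these hypotheses, and that the pseudoinverse expansion $\big(\bar M+\Delta\big)^{+}=\bar M^{+}-\bar M^{+}\Delta\bar M^{+}+\cdots$ is valid (no small-eigenvalue catastrophe, using $p>n_vm$ to keep $\bar M$ well-conditioned on its range), is the delicate part. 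Everything else---the closed-form outer-loop solution, the substitution of the adaptation steps, the exact noise and $\mathbf{w}$ integrals, and the final algebraic simplification into the stated form---follows the same template as the proof of Theorem~\ref{overThm}, with scalars promoted to matrices and $\nu^2/p\cdot I_p$ promoted to $\Sigma_w$.
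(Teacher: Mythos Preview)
Your plan is essentially the paper's own route: write the outer loop as the minimum-norm solution $\boldsymbol\omega^\star=B^T(BB^T)^{-1}\boldsymbol\gamma$, integrate out noise and task weights to get the block-diagonal $\Gamma$, and then replace the random matrices by deterministic surrogates using the stated variance hypotheses. Your identification of $H^t$ as $\mathbb{E}\big[(I-\alpha_t S^{t})\Sigma(I-\alpha_t S^{t})\big]$ is exactly right and is how the paper obtains it.

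Where you diverge is the inversion step, and here you are making it harder than it is. You propose to control a $p\times p$ pseudoinverse via a Marchenko--Pastur/resolvent argument, and you say this is ``already used in section~\ref{sec:derivation}.'' It is not: the paper never touches Stieltjes transforms. The crucial observation you are missing is that in the overparameterized formulation the matrix to invert is $BB^T$, which is $n_vm\times n_vm$ --- a \emph{fixed, small} size while $p,n_t\to\infty$. Its $(k,\ell)$ entry is an inner product of two rows of $B$, i.e.\ a sum over $p$ terms of the rescaled inputs; the hypotheses Eqs.~\ref{corrass1}--\ref{corrass3} are set up precisely so that each such entry concentrates, giving $BB^T=\mathrm{Tr}(H^t)\,I_{n_vm}+O(\xi^{1/2})$ entrywise, and likewise $\Gamma$ and $BH^rB^T$ become scalar multiples of $I_{n_vm}$. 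The inverse is then a one-line Taylor expansion of a genuine $n_vm\times n_vm$ inverse around a multiple of the identity --- no spectral theory, no aspect-ratio limit, no worry about small eigenvalues on a $p$-dimensional range. Your pseudoinverse perturbation series and the ``small-eigenvalue catastrophe'' concern are artifacts of working on the wrong side of the duality. If you switch to inverting $BB^T$, the ``delicate part'' you flag essentially disappears and the error bookkeeping matches the $O(\xi^{-3/2})$ claim directly.
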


\begin{proof}
    The proof of this Theorem can be found in Appendix, section \ref{nG_op_case}.
\end{proof}

Note that this result reduces to Eqs.\ref{finalloss_op}, \ref{htsmall}, \ref{hrsmall} when $\Sigma=I$, $\Sigma_w=I\nu^2/p$, $F=I(p+2)$, $\boldsymbol\omega_0=\mathbf{0}$, $\mathbf{w}=\mathbf{0}$.
This expression for the loss is more difficult to analyze than those given in the previous sections, because it involves traces of nonlinear functions of matrices, all elements of which are free hyperparameters.
Nevertheless, it is possible to show that, as a function of the adaptation learning rate $\alpha_r$, the loss in Eq.\ref{lossnongauss} is still a quadratic function.
As a function of the adaptation learning rate $\alpha_r$, the loss in Eq.\ref{lossnongauss} is the ratio of two fourth order polynomials, but it is difficult to draw any conclusions since their coefficients do not appear to have simple relationships.

Even if the influence of the hyperparameters is not easy to predict, the expression in Eq.\ref{lossnongauss} can still be used to quickly probe the behavior of the loss empirically, by using example values for the $\Sigma$, $\Sigma_w$, $F$, since computing the expression is very fast.
Here we choose values of $\Sigma$, $\Sigma_w$ by a single random draw from a Wishart distribution
\begin{equation}
\label{wishdraw}
\Sigma\sim\mathcal{W}\left(I,p\right)\;\;\;\;\;\;\;\;\;\;\Sigma_w\sim \frac{\nu^2}{p}\mathcal{W}\left(I,p\right)
\end{equation}
Note that the number of degrees of freedom of the distribution is equal to the size of the matrices, $p$, therefore this covariances display significant correlations.
Furthermore, we choose $F=2\Sigma^3+\Sigma\mbox{Tr}\left(\Sigma^2\right)$, which is the value taken when $\mathbf{x}$ follows a Gaussian distribution.
Therefore, we effectively test the loss in Eq.\ref{lossnongauss} for a Gaussian distribution, as in previous section, but we stress that the expression is valid for any distribution of $\mathbf{x}$ within the assumptions of Theorem \ref{theoNG}.
We also run experiments of MAML, applied again to mixed linear regression, but now using the covariance matrices drawn in Eq.\ref{wishdraw}.
Figure \ref{fig: nonlineartheory_op} shows the loss in Eq.\ref{lossnongauss} as a function of the learning rates, during adaptation (panel a) and training (panel b).
Qualitatively, we observe a similar behaviour as in section \ref{sec:op}: the adaptation learning rate has a unique minimum for a positive value of $\alpha_r$, while the training learning rate shows better performance for negative values of $\alpha_t$.
Again, there is a good agreement between theory and experiment, suggesting that Eq.\ref{lossnongauss} is a good approximation.

\begin{figure}[ht]
\begin{center}
\centerline{\includegraphics[width=0.9\textwidth]{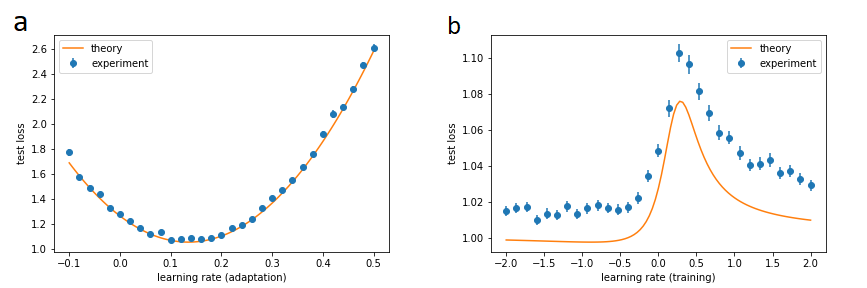}}
\caption{Average test loss of MAML as a function of the learning rate, on overparameterized mixed linear regression with Wishart covariances, as predicted by our theory and confirmed in experiments. a) Effect of learning rate $\alpha_r$ during adaptation. b) Effect of learning rate $\alpha_t$ during training. The optimal learning rate during adaptation is positive, while that during training appears to be negative. Values of parameters: $n_t=30, n_v=2, n_r=20, m=3, p=60, \sigma=1., \nu=0.5$, $\boldsymbol\omega_0=\mathbf{0}$, $\mathbf{w}_0=\mathbf{0}$. In panel a) we set $\alpha_t=0.2$, in panel b) we set $\alpha_r=0.2$. In the experiments, each run is evaluated on $100$ tasks of $50$ data points each, and each point is an average over $100$ runs (a) or $500$ runs (b).}
\label{fig: nonlineartheory_op}
\end{center}
\end{figure}

\subsection{Nonlinear regression}

To investigate whether negative learning rates improve performance on non-linear regression in practice, we studied the simple case of MAML with a neural network applied to a quadratic function. 
Specifically, the target output is generated according to $y=(\mathbf{w}^T\mathbf{x} + b)^2 + z$, where $b$ is a bias term. 
The data $\mathbf{x}$, $z$ and generating parameters $\mathbf{w}$ are sampled as described in section \ref{sec:genmod} (in addition, the bias $b$ was drawn from a Gaussian distribution of zero mean and unit variance.).
We use a $2$-layer feed-forward neural network with ReLU activation functions after the first layer. Weights are initialized following a Gaussian distribution of zero mean and variance equal to the inverse number of inputs. 
We report results with a network width of $400$ in both layers; results were similar with larger network widths.
We use the square loss function and we train the neural network in the outer loop with stochastic gradient descent with a learning rate of $0.001$ for $5000$ epochs (until convergence).
We used most parameters identical to section \ref{sec:op}: $n_t = 30; n_v = 2; n_r = 20; m = 3; p = 60; \sigma = 1, \nu = 0.5, \mathbf{w}_0 = 0$. 
The learning rate for adaptation was set to $\alpha_r = 0.01$. 
Note that in section \ref{sec:op} the model was initialized at the ground truth of the generative model ($\boldsymbol\omega_0=\mathbf{w}_0$), while here the neural network parameters are initialized at random.
Figure \ref{quadratic} shows the test loss as a function of the learning rate $\alpha_t$. 
The best performance is obtained for a negative learning rate of $\alpha_t=-0.0075$.

\begin{wrapfigure}{r}{0.45\textwidth}
  \begin{center}
    \includegraphics[width=0.45\textwidth]{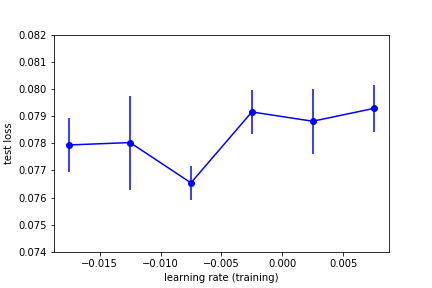}
  \end{center}
  \caption{Average test loss of MAML as a function of the learning rate, on nonlinear (quadratic) regression using a 2-layer feed-forward neural network. Optimal learning rate is negative, consistent with results on the linear case. Each run is evaluated on $1000$ test tasks, and each point is an average over $10$ runs. Error bars show standard errors. Note the qualitative similarity with Figures \ref{fig: lineartheory_op}b and \ref{fig: nonlineartheory_op}b.
}
\label{quadratic}
\end{wrapfigure}

\section{Discussion}

We calculated algebraic expressions for the average test loss of MAML applied to a simple family of linear models, as a function of the hyperparameters.
Surprisingly, we showed that the optimal value of the learning rate of the inner loop during training is negative.
This finding seems to carry over to more interesting nonlinear models in the overparameterized case.
However, additional work is necessary to establish the conditions under which the optimal learning rate may be positive, for example by probing more extensively Eq.\ref{lossnongauss}.

A negative optimal learning rate is surprising and counter-intuitive, since negative learning rates push parameters towards higher values of the loss.
However, the meta-training loss is minimized by the outer loop, therefore it is not immediately obvious whether the learning rate of the inner loop should be positive, and we show that in some circumstances it should not.
However, perhaps obviously, we also show that the learning rate during adaptation at test time should always be positive, otherwise the target task cannot be learned.   

In this work, we considered the case of nonlinear models in the overparameterized case.
However, typical applications of MAML (and meta-learning in general) implement relatively small models due to the heavy computational load of running bi-level optimization, including both outer and inner loop.
Our theory also assumes a limited number of tasks where data is independently drawn in each task, while some applications use a large number of tasks with correlated draws (for example, images may be shared across tasks in few-shot image classification, see \cite{bertinetto_meta-learning_2019}). 
Our theory is valid at the exact optimum of the outer loop, which is equivalent to training the outer loop to convergence, therefore overfitting may occur in the outer loop of our model. 
Another limitation of our theory is represented by the assumptions on the input covariance, which has no correlations in Theorems \ref{overThm}, \ref{underThm}, and is subject to some technical assumptions in Theorem \ref{theoNG}.

To the best of our knowledge, nobody has considered before training meta-learning models with negative learning rates in the inner loop.
Given that some studies advocate removing the inner loop altogether, which is similar to setting the learning rate to zero, then they may as well try a negative one.
On the other hand, it is possible that a negative learning rate does not work in nonlinear, non-overparameterized models, or using input with a complex statistical structure,  settings that are outside the the theory presented in this work.


We would like to thank Paolo Grazieschi for helping with formalizing the theorems, and Ritwik Niyogi for helping with nonlinear regression experiments.

\bibliography{mylibrary}
\bibliographystyle{iclr2021_conference}

\newpage

\section{Appendix} \label{appendix}

\begin{figure}[ht]
\begin{center}
\centerline{\includegraphics[width=0.9\textwidth]{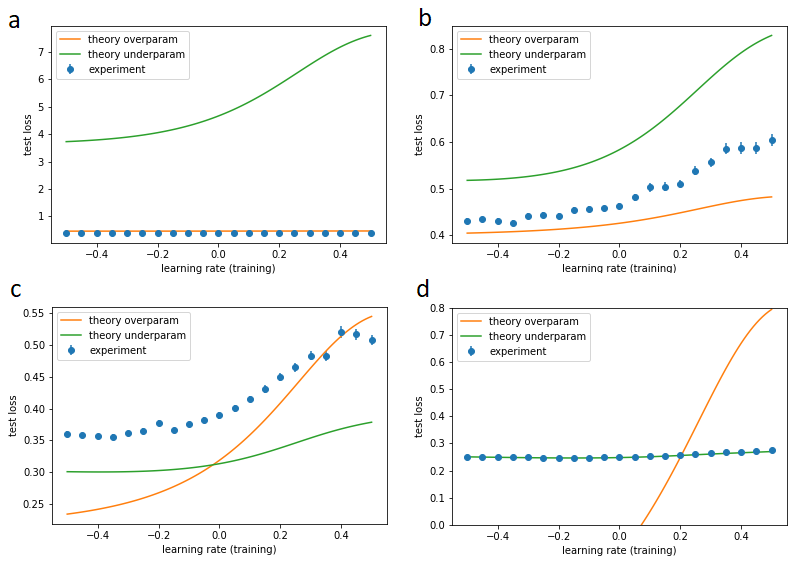}}
\caption{Average test loss of MAML as a function of the learning rate $\alpha_t$ (training) on mixed linear regression, showing the transition from strongly overparameterized (a), to weakly overparameterized (b), weakly underparameterized (c) and strongly underparameterized (d). 
As expected, predictions of theory are accurate only in panels (a) and (d).   
The amount of validation data increases from panels (a) to (d), with the following values: $m=1$, $n_v=2$ (a), $m=5$, $n_v=5$ (b), $m=10$, $n_v=10$ (c), $m=10$, $n_v=40$. Other parameters are equal to: $n_t=40, n_r=40, p=50, \sigma=0.5., \nu=0.5$, $\alpha_r=0.2$, $\boldsymbol\omega_0=\mathbf{0}$, $\mathbf{w}_0=(0.1,0.1,\ldots,0.1)$ (note that overfitting occurs since $\boldsymbol\omega_0\neq\mathbf{w}_0$). In the experiments, each run is evaluated on $100$ test tasks of $50$ data points each, and each point is an average over $100$ runs.}
\label{fig: transition}
\end{center}
\end{figure}

\subsection{Definition of the loss function}
\label{sec:loss}

We consider the problem of mixed linear regression $\mathbf{y}=X\mathbf{w}+\mathbf{z}$ with squared loss, where $X$ is a $n\times p$ matrix of input data, each row is one of $n$ data vectors of dimension $p$, $\mathbf{z}$ is a $n\times 1$ noise vector, $\mathbf{w}$ is a $p\times 1$ vector of generating parameters and $\mathbf{y}$ is a $n\times 1$ output vector.
Data is collected for $m$ tasks, each with a different value of the parameters $\mathbf{w}$ and a different realization of the input $X$ and noise $\mathbf{z}$.
We denote by $\mathbf{w}^{(i)}$ the parameters for task $i$, for $i=1,\ldots,m$.
For a given task $i$, we denote by $X^{t(i)}, X^{v(i)}$ the input data for, respectively, the training and validation sets, by $\mathbf{z}^{t(i)}, \mathbf{z}^{v(i)}$ the corresponding noise vectors and by $\mathbf{y}^{t(i)}, \mathbf{y}^{v(i)}$ the output vectors.
We denote by $n_t$, $n_v$ the data sample size for training and validations sets, respectively.

For a given task $i$, the training output is equal to 
\begin{equation}
\mathbf{y}^{t(i)}=X^{t(i)}\mathbf{w}^{(i)}+\mathbf{z}^{t(i)}
\end{equation}
Similarly, the validation output is equal to
\begin{equation}
\mathbf{y}^{v(i)}=X^{v(i)}\mathbf{w}^{(i)}+\mathbf{z}^{v(i)}.
\end{equation}

We consider MAML as a model for meta-learning (Finn et al 2017).
The meta-training loss is equal to
\begin{equation}
\label{lossmeta}
\mathcal{L}^{meta}=\frac{1}{2n_vm}\sum_{i=1}^m \left|\mathbf{y}^{v(i)}-X^{v(i)}\boldsymbol\theta^{(i)}(\boldsymbol\omega)\right|^2
\end{equation}
where vertical brackets denote euclidean norm, and the estimated parameters $\boldsymbol\theta^{(i)}(\boldsymbol\omega)$ are equal to the one-step gradient update on the single-task training loss $\mathcal{L}^{(i)}= |\mathbf{y}^{t(i)}-X^{t(i)}\boldsymbol\theta^{(i)}|^2/2n_t$, with initial condition given by the meta-parameter $\boldsymbol\omega$.
The single gradient update is equal to
\begin{equation}
\boldsymbol\theta^{(i)}(\boldsymbol\omega)=\left(I_p-\frac{\alpha_t}{n_t}{X^{t(i)}}^TX^{t(i)}\right)\boldsymbol\omega+\frac{\alpha_t}{n_t}{X^{t(i)}}^T\mathbf{y}^{t(i)}
\end{equation}
where $I_p$ is the $p\times p$ identity matrix and $\alpha_t$ is the learning rate.
We seek to minimize the meta-training loss with respect to the meta-parameter $\boldsymbol\omega$, namely
\begin{equation}
\label{omegastar}
\boldsymbol\omega^\star=\mbox{arg}\min_{\boldsymbol\omega}\mathcal{L}^{meta}
\end{equation}
We evaluate the solution $\boldsymbol\omega^\star$ by calculating the meta-test loss
\begin{equation}
\label{losstest0}
\mathcal{L}^{test}= \frac{1}{2n_s}\left|\mathbf{y}^s-X^s\boldsymbol\theta^\star\right|^2
\end{equation}
Note that the test loss is calculated over test data $X^s, \mathbf{z}^s$, and test parameters $\mathbf{w'}$, namely
\begin{equation}
\label{testoutput}
\mathbf{y}^{s}=X^{s}\mathbf{w'}+\mathbf{z}^{s}
\end{equation}
Furthermore, the estimated parameters $\boldsymbol\theta^\star$ are calculated on a separate set of target data $X^{r}, \mathbf{z}^{r}$, namely
\begin{equation}
\label{thetastar}
\boldsymbol\theta^\star=\left(I_p-\frac{\alpha_r}{n_r}{X^{r}}^TX^{r}\right)\boldsymbol\omega^\star+\frac{\alpha_r}{n_r}{X^{r}}^T\mathbf{y}^{r}
\end{equation}
\begin{equation}
\mathbf{y}^{r}=X^{r}\mathbf{w'}+\mathbf{z}^{r}
\end{equation}
Note that the learning rate and sample size can be different at testing, denoted by $\alpha_r, n_r, n_s$.
We are interested in calculating the average test loss, that is the test loss of Eq.\ref{losstest0} averaged over the entire data distribution, equal to
\begin{equation}
\label{losstest}
\overline{\mathcal{L}}^{test}=   \mathop{\mathbb{E}}_{\mathbf{w}}\mathop{\mathbb{E}}_{\mathbf{z}^{t}}\mathop{\mathbb{E}}_{X^{t}}\mathop{\mathbb{E}}_{\mathbf{z}^{v}}\mathop{\mathbb{E}}_{X^{v}}\mathop{\mathbb{E}}_{\mathbf{w}'}\mathop{\mathbb{E}}_{\mathbf{z}^s}\mathop{\mathbb{E}}_{X^s}\mathop{\mathbb{E}}_{\mathbf{z}^r}\mathop{\mathbb{E}}_{X^r}\frac{1}{2n_s}\left|\mathbf{y}^s-X^s\boldsymbol\theta^\star\right|^2
\end{equation}

\subsection{Definition of probability distributions}

We assume that all random variables are Gaussian.
In particular, we assume that the rows of the matrix $X$ are independent, and each row, denoted by $\mathbf{x}$, is distributed according to a multivariate Gaussian with zero mean and unit covariance
\begin{equation}
\mathbf{x}\sim\mathcal{N}\left(0,I_p\right)
\end{equation}
where $I_p$ is the $p\times p$ identity matrix.
Similarly, the noise is distributed following a multivariate Gaussian with zero mean and variance equal to $\sigma^2$, namely
\begin{equation}
\mathbf{z}\sim\mathcal{N}\left(0,\sigma^2I_n\right)
\end{equation}
Finally, the generating parameters are also distributed according to a multivariate Gaussian of variance $\nu^2/p$, namely
\begin{equation}
\mathbf{w}\sim\mathcal{N}\left(\mathbf{w}_0,\frac{\nu^2}{p}I_p\right)
\end{equation}
The generating parameter $\mathbf{w}$ is drawn once and kept fixed within a task, and drawn independently for different tasks.
The values of $\mathbf{x}$ and $\mathbf{z}$ are drawn independently in all tasks and datasets (training, validation, target, test).
In order to perform the calculations in the next section, we need the following results.

\begin{lemma} 

\label{lemma1}

Let $X$ be a Gaussian $n \times p$ random matrix with independent rows, and each row has covariance equal to $I_p$, the $p\times p$ identity matrix. Then:
    \begin{align}
    \label{1ord}
    &\mathbb{E}\left[X^TX\right]=nI_p\\
    \label{2ord}
    &\mathbb{E}\left[\left(X^TX\right)^2\right]=n\left(n+p+1\right)I_p=n^2\mu_2I_p\\
    \label{3ord}
    &\mathbb{E}\left[\left(X^TX\right)^3\right]=n\left(n^2+p^2+3np+3n+3p+4\right)I_p=n^3\mu_3I_p\\
    \label{4ord}
    &\mathbb{E}\left[\left(X^TX\right)^4\right]=n\left(n^3+p^3+6n^2p+6np^2+\right.\\
    &\left.+6n^2+6p^2+17np+21n+21p+20\right)I_p=n^4\mu_4I_p\\
    \label{2ordTr}
    &\mathbb{E}\left[X^TX\;\mbox{Tr}\left(X^TX\right)\right]=\left(n^2p+2n\right)I_p=pn^2\mu_{1,1}I_p\\
    \label{3ordTr}
    &\mathbb{E}\left[\left(X^TX\right)^2\mbox{Tr}\left(X^TX\right)\right]=n\left(n^2p+np^2+np+4n+4p+4\right)I_p=pn^3\mu_{2,1}I_p\\
    \label{3ordTr2}
    &\mathbb{E}\left[X^TX\mbox{Tr}\left(\left(X^TX\right)^2\right)\right]=n\left(n^2p+np^2+np+4n+4p+4\right)I_p=pn^3\mu_{1,2}I_p\\
    \label{4ordTr}
    &\mathbb{E}\left[\left(X^TX\right)^2\mbox{Tr}\left(\left(X^TX\right)^2\right)\right]=n\left(n^3p+np^3+2n^2p^2+2n^2p+2np^2+\right.\\
    &\left.+8n^2+8p^2+21np+20n+20p+20\right)I_p=pn^4\mu_{2,2}I_p
    \end{align}
where the last equality in each of these expressions defines the variables $\mu$.
Furthermore, for any $n\times n$ symmetric matrix C and any $p\times p$ symmetric matrix $D$, independent of $X$:
    \begin{align}
        \label{2ordA}
    &\mathbb{E}\left[X^TCX\right]=\mbox{Tr}\left(C\right)I_p\\
    &\mathbb{E}\left[X^TXDX^TX\right]=n\left(n+1\right)D+n\mbox{Tr}\left(D\right)I_p
    \end{align}

\end{lemma}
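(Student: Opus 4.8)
The plan is to split the work into three parts: reduce every matrix identity to a scalar (trace) computation, establish the two ``master'' identities directly, and obtain the remaining moments by a recursion. For the first part I would exploit rotational invariance. For any orthogonal $O\in O(p)$ the matrix $XO$ has the same law as $X$, and $\Tr(X^TX)$ is invariant under $X\mapsto XO$; hence for every $k\ge 0$ and every product of traces $T(X)=\prod_r\Tr\big((X^TX)^{k_r}\big)$, the expectation $\mathbb{E}\big[(X^TX)^k\,T(X)\big]$ commutes with every $O\in O(p)$ and is therefore a scalar multiple of $I_p$, the scalar being $\tfrac1p\,\mathbb{E}\big[\Tr\big((X^TX)^{k}\big)\,T(X)\big]$. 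So each of Eqs.~\ref{1ord}--\ref{4ordTr} is automatically of the claimed form and only the scalar coefficients (the $\mu$'s) remain to be found.

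\textbf{The two general identities.} I would prove Eq.~\ref{2ordA} and the final identity of the lemma straight from Isserlis' (Wick's) theorem applied to the i.i.d.\ Gaussian entries $X_{ai}$. Expanding in coordinates, $\mathbb{E}[X^TCX]_{ij}=\sum_{a,b}C_{ab}\,\mathbb{E}[X_{ai}X_{bj}]=\sum_a C_{aa}\,\delta_{ij}=\Tr(C)\,\delta_{ij}$. For the second, $\mathbb{E}[X^TXDX^TX]_{ij}=\sum_{a,b,k,l}D_{kl}\,\mathbb{E}[X_{ai}X_{ak}X_{bl}X_{bj}]$, and the three Wick pairings contribute $n^2D_{ij}$, $nD_{ji}$ and $n\,\Tr(D)\,\delta_{ij}$ respectively; using $D=D^T$ gives $n(n+1)D+n\,\Tr(D)I_p$. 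Taking $D=I_p$ here already yields Eq.~\ref{2ord}.

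\textbf{The remaining moments by recursion.} For the higher powers I would use Gaussian integration by parts (Stein's identity $\mathbb{E}[X_{ai}\,g(X)]=\mathbb{E}[\partial g/\partial X_{ai}]$). Writing $W=X^TX$ and $(W^{k+1})_{ij}=\sum_a X_{ai}(XW^k)_{aj}$, differentiating through $W=X^TX$ and summing over $a$ collapses every term to lower powers and produces the loop equation
\[
\mathbb{E}\big[W^{k+1}\big]=(n+k)\,\mathbb{E}\big[W^{k}\big]+\sum_{j=1}^{k}\mathbb{E}\big[\Tr(W^{j})\,W^{k-j}\big].
\]
The same device applied to $\Tr(W^{a})(W^{l})_{ij}$ and to $\Tr(W^{a})\Tr(W^{b})$ gives companion recursions for the trace-weighted matrix moments and the pure scalar trace moments, so the hierarchy closes level by level. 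Starting from $\mathbb{E}[W]=nI_p$ and $\mathbb{E}[\Tr W]=np$ (with $\mathbb{E}[(\Tr W)^2]=\Var(\|X\|_F^2)+(np)^2=n^2p^2+2np$ as an independent check of Eq.~\ref{2ordTr}), I would fill in $\mathbb{E}[W^2],\mathbb{E}[W^3],\mathbb{E}[W^4]$ and the four trace-weighted quantities in turn and read off the polynomials in $n$ and $p$. Equivalently one can bypass the recursion and expand every moment directly by Wick: each pairing of the $2k$ Gaussian factors contributes $p^{c_1}n^{c_2}$, where $c_1,c_2$ count the column- and row-cycles determined by the pairing --- the usual genus expansion for real Wishart matrices.

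\textbf{Main obstacle.} There is no conceptual difficulty here; the only real cost is bookkeeping. The fourth-power moment already involves $105$ Wick pairings of eight Gaussian factors (most of which collapse), and the mixed trace moments up to order $(2,2)$ need similar enumerations; the recursion above keeps this organized, but one still has to track the polynomials carefully to land on the precise stated coefficients.
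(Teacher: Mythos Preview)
Your proposal is correct. The paper's own proof is essentially a one-line remark: the lemma ``follows by direct computations of the above expectations, using Isserlis' theorem,'' with combinatorics handling the higher orders. Your route is more structured in two respects. First, you make the rotational-invariance argument explicit, which immediately explains why every expectation is a scalar multiple of $I_p$ and reduces the task to computing traces; the paper leaves this implicit. Second, and more substantively, you obtain the higher moments through the Stein/loop-equation recursion $\mathbb{E}[W^{k+1}]=(n+k)\,\mathbb{E}[W^k]+\sum_{j=1}^{k}\mathbb{E}[\Tr(W^j)\,W^{k-j}]$ (and its trace-weighted companions), whereas the paper simply enumerates Wick pairings order by order. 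Your recursion scales better --- it turns the $105$-pairing eighth-order computation into a short chain of substitutions and makes the closure of the trace hierarchy transparent --- while the paper's brute-force Wick expansion needs no auxiliary machinery but pays for it in bookkeeping at each level. Both arrive at the same polynomials, and you already note the direct-Wick route as an equivalent fallback.
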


\begin{proof}

The Lemma follows by direct computations of the above expectations, using Isserlis' theorem. Particularly, for higher order exponents, combinatorics plays a crucial role in counting products of different Gaussian variables in an effective way.

\end{proof}

\begin{lemma} 

\label{lemma2}

Let $X^{v(i)}$, $X^{t(i)}$ be Gaussian random matrices, of size respectively $n_v \times p$ and $n_t \times p$, with independent rows, and each row has covariance equal to $I_p$, the $p\times p$ identity matrix. Let $p(\xi)$ and $n_t(\xi)$ be any function of order $O(\xi)$ as $\xi\rightarrow\infty$. Then:
    \begin{align}
    &X^{v(i)}{X^{v(i)}}^T=p\;I_{n_v}+O\left(\xi^{1/2}\right)\\
    &X^{v(i)}{X^{t(i)}}^TX^{t(i)}{X^{v(i)}}^T=pn_t\;I_{n_v}+O\left(\xi^{3/2}\right)\\
    &X^{v(i)}{X^{t(i)}}^TX^{t(i)}{X^{t(i)}}^TX^{t(i)}{X^{v(i)}}^T=pn_t(n_t+p+1)I_{n_v}+O\left(\xi^{5/2}\right)
    \end{align}
Note that the order $O\left(\xi\right)$ applies to all elements of the matrix in each expression.
For $i\neq j$
    \begin{align}
    &X^{v(i)}{X^{v(j)}}^T=O\left(\xi^{1/2}\right)\\
    &X^{v(i)}{X^{t(i)}}^TX^{t(i)}{X^{v(j)}}^T=O\left(\xi^{3/2}\right)\\
    &X^{v(i)}{X^{t(i)}}^TX^{t(i)}{X^{t(j)}}^TX^{t(j)}{X^{v(j)}}^T=O\left(\xi^{5/2}\right)
    \end{align}
Furthermore, for any positive real number $\delta$ and for any $p\times p$ symmetric matrix $D$ independent of X, where Tr$(D)$ and Tr$(D^2)$ are both of order $O(\xi^\delta)$
    \begin{align}
    &X^{v(i)}D{X^{v(i)}}^T=\mbox{Tr}\left(D\right)I_{n_v}+O\left(\xi^{\delta/2}\right)\\
    &X^{v(i)}{X^{t(i)}}^TX^{t(i)}D{X^{v(i)}}^T=\mbox{Tr}\left(D\right)n_tI_{n_v}+O\left(\xi^{1+\delta/2}\right)\\
    &X^{v(i)}{X^{t(i)}}^TX^{t(i)}D{X^{t(i)}}^TX^{t(i)}{X^{v(i)}}^T=\mbox{Tr}\left(D\right)n_t(n_t+p+1)I_{n_v}+O\left(\xi^{2+\delta/2}\right)\\
    &X^{v(i)}D{X^{v(j)}}^T=O\left(\xi^{\delta/2}\right)\\
    &X^{v(i)}{X^{t(i)}}^TX^{t(i)}D{X^{v(j)}}^T=O\left(\xi^{1+\delta/2}\right)\\
    &X^{v(i)}{X^{t(i)}}^TX^{t(i)}D{X^{t(j)}}^TX^{t(j)}{X^{v(j)}}^T=O\left(\xi^{2+\delta/2}\right)
    \end{align}

\end{lemma}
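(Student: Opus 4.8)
The plan is to read each identity in the form ``$M = M_0 + R$'', where $M_0 = \mathbb{E}[M]$ is the deterministic leading term displayed on the right (a scalar multiple of $I_{n_v}$, and $0$ in the $i \neq j$ cases) and $R$ is a centered random $n_v \times n_v$ matrix each of whose entries has second moment of the claimed squared order; since $n_v = O(1)$, the assertion that ``$O(\xi^{a})$ applies to all elements'' is precisely $\mathbb{E}\, R_{ab}^2 = O(\xi^{2a})$, which is the form in which the later proofs invoke this lemma. All moments of ${X^t}^T X^t$ needed below are supplied by Lemma~\ref{lemma1}. To identify $M_0$ I would condition on the $X^t$ factors, which are independent of $X^{v(i)}$, and use $\mathbb{E}\bigl[X^{v} M (X^{v})^T\bigr] = \mathrm{Tr}(M)\, I_{n_v}$ for any matrix $M$ independent of $X^{v}$, which is Eq.~\ref{2ordA} of Lemma~\ref{lemma1} applied to $(X^{v})^T$. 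The first three identities then reduce to $\mathbb{E}\bigl[\mathrm{Tr}\bigl(({X^{t(i)}}^T X^{t(i)})^{k}\bigr)\bigr]$ for $k = 0,1,2$, which by Eqs.~\ref{1ord}--\ref{2ord} (the case $k=0$ being trivial) equal $p$, $p n_t$, and $p n_t(n_t + p + 1)$ --- the stated leading terms; for the $D$-weighted identities the relevant scalars are $\mathrm{Tr}(D)$, $\mathbb{E}\bigl[\mathrm{Tr}({X^t}^T X^t D)\bigr] = n_t \mathrm{Tr}(D)$, and, using $\mathbb{E}[X^T X D X^T X] = n(n+1)D + n\,\mathrm{Tr}(D) I_p$ from Lemma~\ref{lemma1}, $\mathbb{E}\bigl[\mathrm{Tr}({X^t}^T X^t D {X^t}^T X^t)\bigr] = n_t(n_t + p + 1)\mathrm{Tr}(D)$. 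In the $i \neq j$ cases the inner $X^t$ factors are independent of both $X^{v(i)}$ and $X^{v(j)}$, so conditioning on them leaves a centered bilinear form in two independent Gaussian blocks and hence $M_0 = 0$.

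For the fluctuations I would again condition on the $X^t$ factors. Write $B$ for the resulting inner block --- $B = ({X^{t(i)}}^T X^{t(i)})^{k}$ in the sandwiched unweighted cases, $B = {X^{t(i)}}^T X^{t(i)}\, {X^{t(j)}}^T X^{t(j)}$ in the mixed sixth-order case, $B = ({X^{t(i)}}^T X^{t(i)})^{k} D$ in the weighted cases. A diagonal entry of $X^{v} B (X^{v})^T$ has conditional mean $\mathrm{Tr}(B)$ and conditional variance $\leq 2\,\mathrm{Tr}(B B^T)$, and an off-diagonal entry --- or any entry of an $i \neq j$ version, where the two $X^{v}$ blocks are independent --- has conditional mean $0$ and conditional variance $\mathrm{Tr}(B B^T)$. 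Taking the outer expectation reduces everything to $\mathbb{E}\bigl[\mathrm{Tr}(B B^T)\bigr]$ and $\mathrm{Var}\bigl(\mathrm{Tr}(B)\bigr)$, both polynomials in $p, n_t, \mathrm{Tr}(D), \mathrm{Tr}(D^2)$ computable from Lemma~\ref{lemma1}: one checks that $\mathbb{E}\bigl[\mathrm{Tr}(B B^T)\bigr]$ is $O(\xi)$, $O(\xi^{3})$, $O(\xi^{5})$ in the three unweighted sandwiched cases and in the mixed case, and picks up an extra factor $\mathrm{Tr}(D^2) = O(\xi^{\delta})$ in the weighted ones, while $\mathrm{Var}\bigl(\mathrm{Tr}(B)\bigr)$ is always of strictly smaller order. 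This gives $\mathbb{E}\, R_{ab}^2 = O(\xi), O(\xi^{3}), O(\xi^{5})$ (resp.\ the weighted orders), i.e.\ the claimed bounds $R = O(\xi^{1/2}), O(\xi^{3/2}), O(\xi^{5/2})$, and the statements follow by Chebyshev or simply as moment bounds.

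The main obstacle I anticipate is the Isserlis bookkeeping for the two sixth-order expressions (the $k = 2$ sandwich and its mixed and weighted variants): besides reproducing the $p n_t(n_t + p + 1)$ leading term, one must verify that the $O(\xi^{6})$ contributions to $\mathbb{E}\, R_{ab}^2$ --- arising from both $\mathbb{E}\bigl[\mathrm{Tr}(B)^2\bigr]$ and $\bigl(\mathbb{E}\,\mathrm{Tr}(B)\bigr)^2$ --- cancel exactly, leaving only $O(\xi^{5})$; concretely this is the claim that $\mathrm{Var}\bigl(\mathrm{Tr}\bigl(({X^t}^T X^t)^2\bigr)\bigr)$ is of lower order than $\bigl(\mathbb{E}\,\mathrm{Tr}\bigl(({X^t}^T X^t)^2\bigr)\bigr)^2$. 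This is the same ``counting products of different Gaussian variables in an effective way'' flagged in the proof of Lemma~\ref{lemma1}, now carried one order higher; I would organize it by grouping Wick pairings according to how many ``long'' index contractions ($\Theta(p)$ or $\Theta(n_t)$ in size) they carry, reusing the explicit polynomials $\mu_{k}$ and $\mu_{j,k}$ of Lemma~\ref{lemma1} wherever possible rather than re-expanding from scratch.
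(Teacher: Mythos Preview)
Your proposal is correct and is precisely the approach the paper takes: its entire proof reads ``The Lemma follows by direct computations of the expectations and variances of each term.'' You have simply fleshed out what that sentence means --- condition on the $X^t$ blocks, use Eq.~\ref{2ordA} to identify the leading scalar as a trace, and bound the centered remainder entrywise via $\mathrm{Tr}(BB^T)$ and $\mathrm{Var}(\mathrm{Tr}\,B)$, both of which reduce to the moment polynomials of Lemma~\ref{lemma1}. One small slip: in the third weighted identity the inner block is $B = {X^{t}}^T X^{t}\, D\, {X^{t}}^T X^{t}$, not $({X^{t}}^T X^{t})^{2} D$ as your shorthand suggests, but this does not affect the argument.
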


\begin{proof}
The Lemma follows by direct computations of the expectations and variances of each term.

\end{proof}

\begin{lemma} 

\label{lemma3}

Let $X^{v}$, $X^{t}$ be Gaussian random matrices, of size respectively $n_v \times p$ and $n_t \times p$, with independent rows, and each row has covariance equal to $I_p$, the $p\times p$ identity matrix. Let $n_v(\xi)$ and $n_t(\xi)$ be any function of order $O(\xi)$ for $\xi\rightarrow\infty$. Then:
    \begin{align}
    &{X^{v}}^TX^{v}=n_v\;I_{p}+O\left(\xi^{1/2}\right)\\
    &{X^{t}}^TX^{t}{X^{v}}^TX^{v}=n_tn_v\;I_{p}+O\left(\xi^{3/2}\right)\\
    &{X^{t}}^TX^{t}{X^{v}}^TX^{v}{X^{t}}^TX^{t}=n_vn_t(n_t+p+1)I_{p}+O\left(\xi^{5/2}\right)
    \end{align}
    Note that the order $O\left(\xi\right)$ applies to all elements of the matrix in each expression.

\end{lemma}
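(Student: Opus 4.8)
The plan is to read each of the three identities as a mean-square statement: ``$M=B+O(\xi^{\gamma})$'' for a random matrix $M$ and a deterministic matrix $B$ asserts $\E[M]=B$ exactly and $\sqrt{\Var(M_{ij})}=O(\xi^{\gamma})$ uniformly in $i,j$. I would then prove each one in the two steps already used for Lemmas~\ref{lemma1} and~\ref{lemma2}: first identify the mean, then bound the variance of each entry. The only twist relative to Lemma~\ref{lemma2} is that one of the quantities now being summed over is a column index running over $1,\dots,p$.

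\textbf{Means.} The first identity is Eq.~\ref{1ord}. For the second, $X^{t}$ and $X^{v}$ are independent, so $\E[{X^{t}}^{T}X^{t}\,{X^{v}}^{T}X^{v}]=\E[{X^{t}}^{T}X^{t}]\,\E[{X^{v}}^{T}X^{v}]=(n_{t}I_{p})(n_{v}I_{p})=n_{t}n_{v}I_{p}$, again by Eq.~\ref{1ord}. For the third, I would condition on $X^{v}$ and set $D={X^{v}}^{T}X^{v}$, which is symmetric and independent of $X^{t}$; the last identity of Lemma~\ref{lemma1}, $\E[X^{T}XDX^{T}X]=n(n+1)D+n\,\mbox{Tr}(D)I_{p}$, gives $\E\big[{X^{t}}^{T}X^{t}\,D\,{X^{t}}^{T}X^{t}\mid X^{v}\big]=n_{t}(n_{t}+1)D+n_{t}\,\mbox{Tr}(D)I_{p}$, and taking expectations over $X^{v}$ with $\E[D]=n_{v}I_{p}$ and $\E[\mbox{Tr}(D)]=n_{v}p$ (both from Eq.~\ref{1ord}) yields exactly $n_{v}n_{t}(n_{t}+p+1)I_{p}$.

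\textbf{Variances.} Each entry of the three matrices is a polynomial of degree $2$, $4$, $6$ respectively in the jointly Gaussian entries of $X^{t},X^{v}$, so by Isserlis' theorem its variance is a finite sum of products of covariances; organizing this sum by how many summation indices (row indices in $1,\dots,n_{t}$ or $1,\dots,n_{v}$, column indices in $1,\dots,p$) remain free after pairing, the single ``factorized'' pairing reproduces $\E[M_{ij}]^{2}$ and every other contribution loses at least one free large index, hence a factor $O(\xi)$. For the first matrix this is the elementary estimate $\Var\big(({X^{v}}^{T}X^{v})_{ii}\big)=2n_{v}$ and $\Var\big(({X^{v}}^{T}X^{v})_{ij}\big)=n_{v}$ for $i\neq j$, giving entrywise fluctuation $O(\xi^{1/2})$. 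For the second and third I would run the same bookkeeping, most conveniently by writing the deviation $M-\E[M]$ as a sum of terms each carrying one centered block ${X^{t}}^{T}X^{t}-n_{t}I_{p}$ or ${X^{v}}^{T}X^{v}-n_{v}I_{p}$ (already $O(\xi^{1/2})$ entrywise) times uncentered blocks, whose required moments $\E[({X^{t}}^{T}X^{t})^{2}]$, $\E[\mbox{Tr}(({X^{v}}^{T}X^{v})^{2})]$, and so on are precisely the quantities tabulated in Eqs.~\ref{2ord}--\ref{4ordTr}. Each extra ${X^{t}}^{T}X^{t}$- or ${X^{v}}^{T}X^{v}$-factor multiplies the mean by $O(\xi)$ and the variance by $O(\xi^{2})$, producing the advertised orders $\xi^{1/2}\to\xi^{3/2}\to\xi^{5/2}$.

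\textbf{Main obstacle.} The third identity is the delicate one, because the two outer factors ${X^{t}}^{T}X^{t}$ are the \emph{same} random matrix, so one cannot simply multiply means and variances. The honest variance estimate requires expanding a degree-$8$ polynomial in $X^{t}$ (conditionally on $X^{v}$) and then a degree-$4$ polynomial in $X^{v}$, and verifying that among the many pairings only the factorized one survives at top order while the remainder is smaller by exactly $O(\xi)$, all while keeping correct track of the $p$-dependent $\mbox{Tr}(D)$ term that enters the conditional mean. Everything else is the routine Gaussian combinatorics already invoked for Lemmas~\ref{lemma1} and~\ref{lemma2}.
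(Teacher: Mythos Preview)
Your proposal is correct and follows exactly the approach the paper takes: the paper's proof is the single sentence ``The Lemma follows by direct computations of the expectations and variances of each term,'' and you have simply spelled out those computations (means via Eq.~\ref{1ord} and the last identity of Lemma~\ref{lemma1}, variances via Isserlis-type pairing counts / the centering decomposition). Your identification of the third display as the only nontrivial case---because the two outer ${X^{t}}^{T}X^{t}$ blocks are dependent---is accurate, and your conditional-mean calculation and block-centering strategy for the fluctuations are the natural way to execute what the paper leaves implicit.
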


\begin{proof}
The Lemma follows by direct computations of the expectations and variances of each term.

\end{proof}

\subsection{Proof of Theorems \ref{overThm} and \ref{underThm}}
\label{sec:derivation}

We calculate the average test loss as a function of the hyperparameters $n_t$, $n_v$, $n_r$, $p$, $m$, $\alpha_t$, $\alpha_r$, $\sigma$, $\nu$, $\mathbf{w}_0$.
Using the expression in Eq.\ref{testoutput} for the test output, we rewrite the test loss in Eq.\ref{losstest} as
\begin{equation}
\overline{\mathcal{L}}^{test}=\mathop{\mathbb{E}}\frac{1}{2n_s}\left|X^{s}\left(\mathbf{w'}-\boldsymbol\theta^\star\right)+\mathbf{z}^{s}\right|^2
\end{equation}
We start by averaging this expression with respect to $X^s, \mathbf{z}^s$, noting that $\boldsymbol\theta^\star$ does not depend on test data.
We further average with respect to $\mathbf{w'}$, but note that $\boldsymbol\theta^\star$ depends on test parameters, so we average only terms that do not depend on $\boldsymbol\theta^\star$.
Using Eq.\ref{1ord}, the result is
\begin{equation}
\label{losstest2}
\overline{\mathcal{L}}^{test}=\frac{\sigma^2}{2}+\frac{\nu^2}{2}+\frac{\left|\mathbf{w}_0\right|^2}{2}+\mathbb{E}\left[\frac{\left|\boldsymbol\theta^\star\right|^2}{2}-\left(\mathbf{w}_0+\delta\mathbf{w'}\right)^T\boldsymbol\theta^\star\right]
\end{equation}
where we define $\delta\mathbf{w'}=\mathbf{w'}-\mathbf{w}_0$.
The second term in the expectation is linear in $\boldsymbol\theta^\star$ and can be averaged over $X^r, \mathbf{z}^r$, using Eq.\ref{thetastar} and noting that  $\boldsymbol\omega^\star$ does not depend on target data.
The result is
\begin{equation}
\label{Ethetastar}
\mathop{\mathbb{E}}_{X^r}\mathop{\mathbb{E}}_{\mathbf{z}^r}\;\boldsymbol\theta^\star=(1-\alpha_r)\boldsymbol\omega^\star+\alpha_r\left(\mathbf{w}_0+\delta\mathbf{w'}\right)
\end{equation}
Using Eq.\ref{Ethetastar} we average over $\mathbf{w'}$ the second term in the expectation of Eq.\ref{losstest2} and find
\begin{equation}
\label{losstest3}
\overline{\mathcal{L}}^{test}=\frac{\sigma^2}{2}+\left(\frac{1}{2}-\alpha_r\right)\left(\nu^2+\left|\mathbf{w}_0\right|^2\right)-\left(1-\alpha_r\right)\mathbf{w}_0^T\mathbb{E}\;\boldsymbol\omega^\star+\mathbb{E}\frac{\left|\boldsymbol\theta^\star\right|^2}{2}
\end{equation}
We average the last term of this expression over $\mathbf{z}^r, \mathbf{w'}$, using Eq.\ref{thetastar} and noting that $\boldsymbol\omega^\star$ does not depend on target data and test parameters.
The result is
\begin{align}
&\mathop{\mathbb{E}}_{\mathbf{w'}}\mathop{\mathbb{E}}_{\mathbf{z}^r}\left|\boldsymbol\theta^\star\right|^2=\left|\boldsymbol\omega^\star\right|^2+\frac{\alpha_r^2}{n_r^2}\left(\boldsymbol\omega^\star-\mathbf{w}_0\right)^T\left({X^{r}}^TX^{r}\right)^2\left(\boldsymbol\omega^\star-\mathbf{w}_0\right)-\\
&-\frac{2\alpha_r}{n_r}{X^{r}}^TX^{r}{\boldsymbol\omega^\star}^T\left(\boldsymbol\omega^\star-\mathbf{w}_0\right)+\frac{\alpha_r^2\sigma^2}{n_r^2}\mbox{Tr}\left[{X^{r}}{X^{r}}^T\right]+\frac{\alpha_r^2\nu^2}{n_r^2p}\mbox{Tr}\left[\left({X^{r}}{X^{r}}^T\right)^2\right]
\end{align}
We now average over $X^r$, again noting that $\boldsymbol\omega^\star$ does not depend on target data.
Using Eqs.\ref{1ord}, \ref{2ord}, we find
\begin{equation}
\mathop{\mathbb{E}}_{X^r}\mathop{\mathbb{E}}_{\mathbf{w'}}\mathop{\mathbb{E}}_{\mathbf{z}^r}\left|\boldsymbol\theta^\star\right|^2=\left|\boldsymbol\omega^\star\right|^2+\alpha_r^2\left(1+\frac{p+1}{n_r}\right)\left(\nu^2+\left|\boldsymbol\omega^\star-\mathbf{w}_0\right|^2\right)-2\alpha_r{\boldsymbol\omega^\star}^T\left(\boldsymbol\omega^\star-\mathbf{w}_0\right)+\frac{\alpha_r^2\sigma^2p}{n_r}
\end{equation}
We can now rewrite the average test loss \ref{losstest3} as
\begin{equation}
\label{losstest4}
\overline{\mathcal{L}}^{test}=\frac{\sigma^2}{2}\left(1+\frac{\alpha_r^2p}{n_r}\right)+\frac{1}{2}\left[\left(1-\alpha_r\right)^2+\alpha_r^2\frac{p+1}{n_r}\right]\left(\nu^2+\mathbb{E}\left|\boldsymbol\omega^\star-\mathbf{w}_0\right|^2\right)
\end{equation}
In order to average the last term, we need an expression for $\boldsymbol\omega^\star$.
We note that the loss in Eq.\ref{lossmeta} is quadratic in $\boldsymbol\omega$, therefore the solution of Eq.\ref{omegastar} can be found using standard linear algebra.
In particular, the loss in Eq.\ref{lossmeta} can be rewritten as
\begin{equation}
\label{lossmetashort}
\mathcal{L}^{meta}=\frac{1}{2n_vm}\left|\boldsymbol\gamma-B\boldsymbol\omega\right|^2
\end{equation}
where $\boldsymbol\gamma$ is a vector of shape $n_vm\times 1$, and $B$ is a matrix of shape $n_vm\times p$.
The vector $\boldsymbol\gamma$ is a stack of $m$ vectors
\begin{equation}
\boldsymbol\gamma=\left(
\begin{matrix}X^{v(1)}\left(I_p-\frac{\alpha_t}{n_t}{X^{t(1)}}^TX^{t(1)}\right)\mathbf{w}^{(1)}-\frac{\alpha_t}{n_t}X^{v(1)}{X^{t(1)}}^T\mathbf{z}^{t(1)}+\mathbf{z}^{v(1)}\\ \vdots\\X^{v(m)}\left(I_p-\frac{\alpha_t}{n_t}{X^{t(m)}}^TX^{t(m)}\right)\mathbf{w}^{(m)}-\frac{\alpha_t}{n_t}X^{v(m)}{X^{t(m)}}^T\mathbf{z}^{t(m)}+\mathbf{z}^{v(m)}
\end{matrix}\right)
\end{equation}
Similarly, the matrix $B$ is a stack of $m$ matrices
\begin{equation}
\label{Bmatrix}
B=\left(
\begin{matrix}X^{v(1)}\left(I_p-\frac{\alpha_t}{n_t}{X^{t(1)}}^TX^{t(1)}\right)\\ \vdots\\X^{v(m)}\left(I_p-\frac{\alpha_t}{n_t}{X^{t(m)}}^TX^{t(m)}\right)
\end{matrix}\right)
\end{equation}
We denote by $I_p$ the $p\times p$ identity matrix.
The expression for $\boldsymbol\omega$ that minimizes Eq.\ref{lossmetashort} depends on whether the problem is overparameterized ($p>n_vm$) or underparameterized ($p<n_vm$), therefore we distinguish these two cases in the following sections.

\subsubsection{Overparameterized case (Theorem \ref{overThm})} 
\label{op_case}

In the overparameterized case ($p>n_vm$), under the assumption that the inverse of $BB^T$ exists, the value of $\boldsymbol\omega$ that minimizes Eq.\ref{lossmetashort} is equal to
\begin{equation}
\label{omegastar_op}
\boldsymbol\omega^\star=B^T\left(BB^T\right)^{-1}\boldsymbol\gamma+\left[I_p-B^T\left(BB^T\right)^{-1}B\right]\boldsymbol\omega_0
\end{equation}
The vector $\boldsymbol\omega_0$ is interpreted as the initial condition of the parameter optimization of the outer loop, when optimized by gradient descent.
Note that the matrix $B$ does not depend on $\mathbf{w}, \mathbf{z}^t, \mathbf{z}^v$, and
$\mathop{\mathbb{E}}_{\mathbf{w}}\mathop{\mathbb{E}}_{\mathbf{z}^t}\mathop{\mathbb{E}}_{\mathbf{z}^v}\;\boldsymbol\gamma=B\mathbf{w}_0$.
We denote by $\delta\boldsymbol\gamma$ the deviation from the average, and we have
\begin{equation}
\boldsymbol\omega^\star-\mathbf{w}_0=B^T\left(BB^T\right)^{-1}\delta\boldsymbol\gamma+\left[I_p-B^T\left(BB^T\right)^{-1}B\right]\left(\boldsymbol\omega_0-\mathbf{w}_0\right)
\end{equation}
We square this expression and average over $\mathbf{w}, \mathbf{z}^t, \mathbf{z}^v$.
We use the cyclic property of the trace and the fact that $B^T\left(BB^T\right)^{-1}B$ is a projection.
The result is
\begin{equation} \label{exprSquarediff}
\left|\boldsymbol\omega^\star-\mathbf{w}_0\right|^2=\mbox{Tr}\left[\Gamma\left(BB^T\right)^{-1}\right]+\left(\boldsymbol\omega_0-\mathbf{w}_0\right)^T\left[I_p-B^T\left(BB^T\right)^{-1}B\right]\left(\boldsymbol\omega_0-\mathbf{w}_0\right)
\end{equation}
The matrix $\Gamma$ is defined as
\begin{equation}
\label{Gamma1}
\Gamma=\mathop{\mathbb{E}}_{\mathbf{w}}\mathop{\mathbb{E}}_{\mathbf{z}^t}\mathop{\mathbb{E}}_{\mathbf{z}^v}\delta\boldsymbol\gamma\;\delta\boldsymbol\gamma^T=\left(
\begin{matrix}
\Gamma^{(1)}&0&0\\0&\ddots&0\\0&0&\Gamma^{(m)}
\end{matrix}
\right)
\end{equation}
Where matrix blocks are given by the following expression
\begin{align}
\label{Gamma2}
\Gamma^{(i)}=\frac{\nu^2}{p}X^{v(i)}\left(I_p-\frac{\alpha_t}{n_t}{X^{t(i)}}^TX^{t(i)}\right)^2{X^{v(i)}}^T+\sigma^2\left(I_{n_v}+\frac{\alpha_t^2}{n_t^2}X^{v(i)}{X^{t(i)}}^TX^{t(i)}{X^{v(i)}}^T\right)
\end{align}
It is convenient to rewrite the scalar product of Eq.\ref{exprSquarediff} in terms of the trace of outer products
\begin{equation}
\label{avgomegastar_op}
\left|\boldsymbol\omega^\star-\mathbf{w}_0\right|^2=\mbox{Tr}\left[\left(BB^T\right)^{-1}\left(\Gamma-B\left(\boldsymbol\omega_0-\mathbf{w}_0\right)\left(\boldsymbol\omega_0-\mathbf{w}_0\right)^TB^T\right)\right]+\left|\boldsymbol\omega_0-\mathbf{w}_0\right|^2
\end{equation}
In order to calculate $\mathbb{E}\left|\boldsymbol\omega^\star-\mathbf{w}_0\right|^2$ in Eq.\ref{losstest4}  we need to average this expression over training and validation data.
These averages are hard to compute since they involve nonlinear functions of the data.
However, we can approximate these terms by assuming that $p$ and $n_t$ are large, both of order $O(\xi)$, where $\xi$ is a large number.
Furthermore, we assume that $\left|\boldsymbol\omega_0-\mathbf{w}_0\right|$ is of order $O(\xi^{-1/4})$.
Using Lemma \ref{lemma2}, together with the expressions of $B$ (Eq.\ref{Bmatrix}) and $\Gamma$ (Eqs.\ref{Gamma1},\ref{Gamma2}), we can prove that
\begin{equation} \label{BBT}
\frac{1}{p}BB^T=\left[\left(1-\alpha_t\right)^2+\alpha_t^2\frac{p+1}{n_t}\right]I_{n_vm}+O\left(\xi^{-1/2}\right)
\end{equation}
\begin{equation}
\Gamma=\left\{\nu^2\left[\left(1-\alpha_t\right)^2+\alpha_t^2\frac{p+1}{n_t}\right]+\sigma^2\left(1+\frac{\alpha_t^2p}{n_t}\right)\right\}I_{n_vm}+O\left(\xi^{-1/2}\right)
\end{equation}
\begin{equation}
B\left(\boldsymbol\omega_0-\mathbf{w}_0\right)\left(\boldsymbol\omega_0-\mathbf{w}_0\right)^TB^T=\left|\boldsymbol\omega_0-\mathbf{w}_0\right|^2\left[\left(1-\alpha_t\right)^2+\alpha_t^2\frac{p+1}{n_t}\right]I_{n_vm}+O\left(\xi^{-1/2}\right)
\end{equation}
Using Eq.\ref{BBT} and Taylor expansion, the inverse $\left(BB^T\right)^{-1}$ is equal to
\begin{equation}
\left(BB^T\right)^{-1}=\frac{1}{p}\left[\left(1-\alpha_t\right)^2+\alpha_t^2\frac{p+1}{n_t}\right]^{-1}I_{n_vm}+O\left(\xi^{-3/2}\right),
\end{equation}
Substituting the three expressions above in Eq.\ref{avgomegastar_op}, and ignoring terms of lower order, we find
\begin{equation}
\mathbb{E}\left|\boldsymbol\omega^\star-\mathbf{w}_0\right|^2=\left(1-\frac{n_vm}{p}\right)\left|\boldsymbol\omega_0-\mathbf{w}_0\right|^2+\frac{n_vm}{p}\left[\nu^2+\sigma^2\frac{1+\frac{\alpha_t^2p}{n_t}}{\left(1-\alpha_t\right)^2+\alpha_t^2\frac{p+1}{n_t}}\right]+O\left(\xi^{-3/2}\right)
\end{equation}
Substituting this expression into in Eq.\ref{losstest4}, we find the value of average test loss
\begin{align}
\overline{\mathcal{L}}^{test}=&\frac{\sigma^2}{2}\left(1+\frac{\alpha_r^2p}{n_r}\right)+\\
+&h^r\left[\frac{\nu^2}{2}\left(1+\frac{n_vm}{p}\right)+\frac{1}{2}\left(1-\frac{n_vm}{p}\right)\left|\boldsymbol\omega_0-\mathbf{w}_0\right|^2+\frac{\sigma^2n_vm}{2p}\frac{1+\frac{\alpha_t^2p}{n_t}}{h^t}\right]+O\left(\xi^{-3/2}\right)
\end{align}
where we define the following expressions
\begin{align}
h^t=\left(1-\alpha_t\right)^2+\alpha_t^2\frac{p+1}{n_t}\;\;\;\;\;\mbox{and}\;\;\;\;\;h^r=\left(1-\alpha_r\right)^2+\alpha_r^2\frac{p+1}{n_r}
\end{align}

\subsubsection{Underparameterized case (Theorem \ref{underThm})} 
\label{up_case}

In the underparameterized case ($p<n_vm$), under the assumption that the inverse of $B^TB$ exists, the value of $\boldsymbol\omega$ that minimizes Eq.\ref{lossmetashort} is equal to
\begin{equation}
\label{omegastar_up}
\boldsymbol\omega^\star=\left(B^TB\right)^{-1}B^T\boldsymbol\gamma
\end{equation}
Note that the matrix $B$ does not depend on $\mathbf{w}, \mathbf{z}^t, \mathbf{z}^v$, and
$\mathop{\mathbb{E}}_{\mathbf{w}}\mathop{\mathbb{E}}_{\mathbf{z}^t}\mathop{\mathbb{E}}_{\mathbf{z}^v}\;\boldsymbol\gamma=B\mathbf{w}_0$.
We denote by $\delta\boldsymbol\gamma$ the deviation from the average, and we have
\begin{equation}
\left|\boldsymbol\omega^\star-\mathbf{w}_0\right|^2=\mbox{Tr}\left[\left(B^TB\right)^{-1}B^T\delta\boldsymbol\gamma\;\delta\boldsymbol\gamma^TB\left(B^TB\right)^{-1}\right]
\end{equation}
We need to average this expression in order to calculate $\mathbb{E}\left|\boldsymbol\omega^\star-\mathbf{w}_0\right|^2$ in Eq.\ref{losstest4}.
We start by averaging $\delta\boldsymbol\gamma\;\delta\boldsymbol\gamma^T$ over $\mathbf{w}, \mathbf{z}^t, \mathbf{z}^v$, since $B$ does not depend on those variables.
Note that $\mathbf{w}, \mathbf{z}^t, \mathbf{z}^v$ are independent on each other and across tasks.
As in previous section, we denote by $\Gamma$ the result of this operation, given by Eq.s\ref{Gamma1}, \ref{Gamma2}.
Finally, we need to average over the training and validation data
\begin{equation}
\label{avgomegastar_up}
\mathbb{E}\left|\boldsymbol\omega^\star-\mathbf{w}_0\right|^2=\mathop{\mathbb{E}}_{X^t}\mathop{\mathbb{E}}_{X^v}\mbox{Tr}\left[\left(B^TB\right)^{-1}B^T\Gamma B\left(B^TB\right)^{-1}\right]
\end{equation}
It is hard to average this expression because it includes nonlinear functions of the data.
However, we can approximate these terms by assuming that either $m$ or $\xi$ (or both) is a large number, where $\xi$ is defined by assuming that both $n_t$ and $n_v$ are of order $O(\xi)$.
Using Lemma \ref{lemma3}, together with the expression of $B$ (Eq.\ref{Bmatrix}), and noting that each factor in Eq.\ref{avgomegastar_up} has a sum over $m$ independent terms, we can prove that
\begin{equation}
\frac{1}{n_vm}B^TB=\left(1-2\alpha_t+\alpha_t^2\mu_2\right)I_p+O\left((m\xi)^{-1/2}\right)
\end{equation}
The expression for $\mu_2$ is given in Eq.\ref{2ord}.
Using this result and a Taylor expansion, the inverse is equal to
\begin{equation}
n_vm\left(B^TB\right)^{-1}=\left(1-2\alpha_t+\alpha_t^2\mu_2\right)^{-1}I_p+O\left((m\xi)^{-1/2}\right)
\end{equation}
Similarly, the term $B^T\Gamma B$ is equal to its average plus a term of smaller order
\begin{equation}
\frac{1}{n_vm}B^T\Gamma B=\frac{1}{n_vm}\mathbb{E}\left(B^T\Gamma B\right)+O\left((m\xi)^{-1/2}\right)
\end{equation}
We substitute these expressions in Eq.\ref{avgomegastar_up} and neglect lower orders.
Here we show how to calculate explicitly the expectation of $B^T\Gamma B$.
For ease of notation, we define the matrix $A^{t(i)}=I-\frac{\alpha_t}{n_t}{X^{t(i)}}^TX^{t(i)}$.
Using the expressions of $B$ (Eq.\ref{Bmatrix}) and $\Gamma$ (Eqs.\ref{Gamma1},\ref{Gamma2}), the expression for $B^T\Gamma B$ is given by
\begin{align}
\label{BGB}
&B^T\Gamma B=\sigma^2\sum_{i=1}^m{A^{t(i)}}^T{X^{v(i)}}^TX^{v(i)}A^{t(i)}+\frac{\nu^2}{p}\sum_{i=1}^m\left({A^{t(i)}}^T{X^{v(i)}}^TX^{v(i)}A^{t(i)}\right)^2+\nonumber\\
&+\frac{\alpha^2_t\sigma^2}{n_t^2}\sum_{i=1}^m{A^{t(i)}}^T{X^{v(i)}}^TX^{v(i)}{X^{t(i)}}^TX^{t(i)}{X^{v(i)}}^TX^{v(i)}A^{t(i)}
\end{align}
We use Eqs.\ref{1ord}, \ref{2ord} to calculate the average of the first term in Eq.\ref{BGB}
\begin{align}
\mathop{\mathbb{E}}_{X^t}\mathop{\mathbb{E}}_{X^v}\sum_{i=1}^m{A^{t(i)}}^T{X^{v(i)}}^TX^{v(i)}A^{t(i)}=n_vm\left(1-2\alpha_t+\alpha_t^2\mu_2\right)I_p
\end{align}
We use Eqs.\ref{1ord}, \ref{2ord}, \ref{3ord}, \ref{2ordA}, \ref{2ordTr}, \ref{3ordTr}, \ref{3ordTr2}, \ref{4ordTr} to calculate the average of the second term
\begin{align}
\label{E2}
&\mathop{\mathbb{E}}_{X^t}\mathop{\mathbb{E}}_{X^v}\sum_{i=1}^m\left({A^{t(i)}}^T{X^{v(i)}}^TX^{v(i)}A^{t(i)}\right)^2=\mathop{\mathbb{E}}_{X^t}\sum_{i=1}^m\left[n_v\left(n_v+1\right){A^{t(i)}}^4+n_v{A^{t(i)}}^2\mbox{Tr}\left({A^{t(i)}}^2\right)\right]=\\
&=mn_v\left(n_v+1\right)\left(1-4\alpha_t+6\alpha_t^2\mu_2-4\alpha_t^3\mu_3+\alpha_t^4\mu^4\right)I_p+\nonumber\\
&+mn_vp\left(1-4\alpha_t+2\alpha_t^2\mu_2+4\alpha_t^2\mu_{1,1}-4\alpha_t^3\mu_{2,1}+\alpha_t^4\mu_{2,2}\right)I_p
\end{align}
Finally, we compute the average of the third term, using Eqs.\ref{1ord}, \ref{2ord}, \ref{3ord}, \ref{4ord}, \ref{2ordA}, \ref{2ordTr}, \ref{3ordTr}
\begin{align}
&\mathop{\mathbb{E}}_{X^t}\mathop{\mathbb{E}}_{X^v}\sum_{i=1}^m{A^{t(i)}}^T{X^{v(i)}}^TX^{v(i)}{X^{t(i)}}^TX^{t(i)}{X^{v(i)}}^TX^{v(i)}A^{t(i)}=\\
&=\mathop{\mathbb{E}}_{X^t}\sum_{i=1}^m\left[n_v\left(n_v+1\right){A^{t(i)}}^T{X^{t(i)}}^TX^{t(i)}A^{t(i)}+n_v{A^{t(i)}}^TA^{t(i)}\mbox{Tr}\left({X^{t(i)}}^TX^{t(i)}\right)\right]=\\
\label{E3}
&=mn_v\left(n_v+1\right)n_t\left(1-2\alpha_t\mu_2+\alpha_t^2\mu_3\right)I_p+mn_vn_tp\left(1-2\alpha_t\mu_{1,1}+\alpha_t^2\mu_{2,1}\right)I_p
\end{align}
Putting everything together in Eq.\ref{avgomegastar_up}, and applying the trace operator, we find the following expression for the meta-parameter variance
\begin{align}
\label{Eomegasquare}
&\mathbb{E}\left|\boldsymbol\omega^\star-\mathbf{w}_0\right|^2=\frac{p}{n_vm}\left(1-2\alpha_t+\alpha_t^2\mu_2\right)^{-2}\bigg\{\sigma^2\left(1-2\alpha_t+\alpha_t^2\mu_2\right)+\nonumber\\
&+\frac{\alpha^2_t\sigma^2}{n_t}\left[\left(n_v+1\right)\left(1-2\alpha_t\mu_2+\alpha_t^2\mu_3\right)+p\left(1-2\alpha_t\mu_{1,1}+\alpha_t^2\mu_{2,1}\right)\right]\nonumber\\
&+\frac{\nu^2}{p}\bigg[\left(n_v+1\right)\left(1-4\alpha_t+6\alpha_t^2\mu_2-4\alpha_t^3\mu_3+\alpha_t^4\mu^4\right)+\nonumber\\
&+p\left(1-4\alpha_t+2\alpha_t^2\mu_2+4\alpha_t^2\mu_{1,1}-4\alpha_t^3\mu_{2,1}+\alpha_t^4\mu_{2,2}\right)\bigg]\bigg\}+O\left((m\xi)^{-3/2}\right)
\end{align}
We rewrite this expression as
\begin{align}
&\mathbb{E}\left|\boldsymbol\omega^\star-\mathbf{w}_0\right|^2=&\frac{p}{{h^t}^2n_vm}\left\{\sigma^2\left[h^t+\frac{\alpha_t^2}{n_t}\left[\left(n_v+1\right)g_1+pg_2\right]\right]+\frac{\nu^2}{p}\left[\left(n_v+1\right)g_3+pg_3\right]\right\}+\nonumber\\
&+O\left((m\xi)^{-3/2}\right)
\end{align}
where we defined the following expressions for $g_i$
\begin{align}
\label{gpoly1}
&g_1=1-2\alpha_t\mu_2+\alpha_t^2\mu_3\\
&g_2=1-2\alpha_t\mu_{1,1}+\alpha_t^2\mu_{2,1}\\
&g_3=1-4\alpha_t+6\alpha_t^2\mu_2-4\alpha_t^3\mu_3+\alpha_t^4\mu^4\\
\label{gpoly4}
&g_4=1-4\alpha_t+2\alpha_t^2\mu_2+4\alpha_t^2\mu_{1,1}-4\alpha_t^3\mu_{2,1}+\alpha_t^4\mu_{2,2}
\end{align}
and $\mu_i$ are equal to
\begin{align}
&\mu_2=\frac{1}{n_t}\left(n_t+p+1\right)\\
&\mu_3=\frac{1}{n_t^2}\left(n_t^2+p^2+3n_tp+3n_t+3p+4\right)\\
&\mu_4=\frac{1}{n_t^3}\left(n_t^3+p^3+6n_t^2p+6n_tp^2+6n_t^2+6p^2+17n_tp+21n_t+21p+20\right)\\
&\mu_{1,1}=\frac{1}{n_t^2p}\left(n_t^2p+2n_t\right)\\
&\mu_{2,1}=\frac{1}{n_t^2p}\left(n_t^2p+n_tp^2+n_tp+4n_t+4p+4\right)\\
&\mu_{2,2}=\frac{1}{n_t^3p}\left(n_t^3p+n_tp^3+2n_t^2p^2+2n_t^2p+2n_tp^2+8n_t^2+8p^2+21n_tp+20n_t+20p+20\right)
\end{align}
Substituting this expression back into Eq.\ref{losstest4} returns the final expression for the average test loss, equal to
\begin{align}
&\overline{\mathcal{L}}^{test}=\frac{\sigma^2}{2}\left(1+\frac{\alpha_r^2p}{n_r}\right)+\frac{h^r\nu^2}{2}+\nonumber\\
&+\frac{h^r}{2{h^t}^2}\frac{p}{n_vm}\left\{\sigma^2\left[h^t+\frac{\alpha_t^2}{n_t}\left[\left(n_v+1\right)g_1+pg_2\right]\right]+\frac{\nu^2}{p}\left[\left(n_v+1\right)g_3+pg_4\right]\right\}+O\left((m\xi)^{-3/2}\right)
\end{align}

\subsection{Proof of Theorem \ref{theoNG}} \label{nG_op_case}

In this section, we release some assumption on the distributions of data and parameters.
In particular, we do not assume a specific distribution for input data vectors $\mathbf{x}$ and generating parameter vector $\mathbf{w}$, besides that different data vectors are independent, and so are data and parameters for different tasks.
We further assume that those vectors have zero mean, and denote their covariance as
\begin{equation}
\Sigma=\mathbb{E}\mathbf{x}\mathbf{x}^T
\end{equation}
\begin{equation}
\Sigma_w=\mathbb{E}\mathbf{w}\mathbf{w}^T
\end{equation}
We will also use the following matrix, including fourth order moments
\begin{equation}
F=\mathbb{E}\left(\mathbf{x}^T\Sigma\mathbf{x}\right)\mathbf{x}\mathbf{x}^T
\end{equation}
We do not make any assumption about the distribution of $\mathbf{x}$, but we note that, if $\mathbf{x}$ is Gaussian, then $F=2\Sigma^3+\Sigma\mbox{Tr}\left(\Sigma^2\right)$.
We keep the assumption that the output noise is Gaussian and independent for different data points and tasks, with variance $\sigma^2$.
Using the same notation as in previous sections, we will also use the following expressions (for any $p\times p$ matrix $A$)
\begin{align}
\label{1ordcorr}
&\mathbb{E}\left[X^TX\right]=n\Sigma\\
\label{2ordcorr2}
&\mathbb{E}\;\mbox{Tr}\left[\Sigma X^TXAX^TX\right]=\mbox{Tr}\left\{A\left[ n^2\Sigma^3+n\left(F-\Sigma^3\right)\right]\right\}
\end{align}
We proceed to derive the same formula under these less restrictive assumptions, in the overparameterized case only, following is the same derivation of section \ref{sec:derivation}.
We further assume $\boldsymbol\omega_0=0$, $\mathbf{w}_0=0$.
Again we start from the expression in Eq.\ref{testoutput} for the test output, and we rewrite the test loss in Eq.\ref{losstest} as
\begin{equation}
\overline{\mathcal{L}}^{test}=\mathop{\mathbb{E}}\frac{1}{2n_s}\left|X^{s}\left(\mathbf{w'}-\boldsymbol\theta^\star\right)+\mathbf{z}^{s}\right|^2
\end{equation}
We average this expression with respect to $X^s, \mathbf{z}^s$, noting that $\boldsymbol\theta^\star$ does not depend on test data.
We further average with respect to $\mathbf{w'}$, but note that $\boldsymbol\theta^\star$ depends on test parameters, so we average only terms that do not depend on $\boldsymbol\theta^\star$.
Using Eq.\ref{1ordcorr}, the result is
\begin{equation}
\label{losstest2corr}
\overline{\mathcal{L}}^{test}=\frac{\sigma^2}{2}+\frac{1}{2}\mbox{Tr}\left(\Sigma\Sigma_w\right)+\mathbb{E}\left[\frac{1}{2}{\boldsymbol\theta^\star}^T\Sigma\;\boldsymbol\theta^\star-\mathbf{w'}^T\Sigma\;\boldsymbol\theta^\star\right]
\end{equation}
The second term in the expectation is linear in $\boldsymbol\theta^\star$ and can be averaged over $X^r, \mathbf{z}^r$, using Eq.\ref{thetastar} and noting that  $\boldsymbol\omega^\star$ does not depend on target data.
The result is
\begin{equation}
\label{Ethetastarcorr}
\mathop{\mathbb{E}}_{X^r}\mathop{\mathbb{E}}_{\mathbf{z}^r}\;\boldsymbol\theta^\star=(I-\alpha_r\Sigma)\boldsymbol\omega^\star+\alpha_r\Sigma\mathbf{w'}
\end{equation}
Furthermore, we show below (Eq.\ref{omegastar_opcorr}) that the following average holds
\begin{equation}
\label{Eomegastarcorr}
\mathop{\mathbb{E}}_{\mathbf{w}}\mathop{\mathbb{E}}_{\mathbf{z}^t}\mathop{\mathbb{E}}_{\mathbf{z}^v}\;\boldsymbol\omega^\star=0
\end{equation}
Combining Eqs.\ref{Ethetastarcorr}, \ref{Eomegastarcorr}, we can calculate the second term in the expectation of Eq.\ref{losstest2corr} and find
\begin{equation}
\label{losstest3corr}
\overline{\mathcal{L}}^{test}=\frac{\sigma^2}{2}+\frac{1}{2}\mbox{Tr}\left(\Sigma\Sigma_w\right)-\alpha_r\mbox{Tr}\left(\Sigma^2\Sigma_w\right)+\mathbb{E}\frac{1}{2}{\boldsymbol\theta^\star}^T\Sigma\;\boldsymbol\theta^\star
\end{equation}
We start by averaging the third term of this expression over $\mathbf{z}^r, \mathbf{w'}$, using Eq.\ref{thetastar} and noting that $\boldsymbol\omega^\star$ does not depend on target data and test parameters.
The result is
\begin{align}
&\mathop{\mathbb{E}}_{\mathbf{w'}}\mathop{\mathbb{E}}_{\mathbf{z}^r}\;{\boldsymbol\theta^\star}^T\Sigma\;\boldsymbol\theta^\star=\mbox{Tr}\left[\Sigma\left(I-\frac{\alpha_r}{n_r}{X^{r}}^TX^{r}\right)\boldsymbol\omega^\star{\boldsymbol\omega^\star}^T\left(I-\frac{\alpha_r}{n_r}{X^{r}}^TX^{r}\right)\right]+\\
&+\frac{\alpha_r^2\sigma^2}{n_r^2}\mbox{Tr}\left[{X^{r}}\Sigma{X^{r}}^T\right]+\frac{\alpha_r^2}{n_r^2}\mbox{Tr}\left[\Sigma{X^{r}}^TX^{r}\Sigma_w{X^{r}}^TX^{r}\right]
\end{align}
We now average over $X^r$, again noting that $\boldsymbol\omega^\star$ does not depend on target data.
Using Eqs.\ref{1ordcorr}, \ref{2ordcorr2}, we find
\begin{align}
&\mathop{\mathbb{E}}_{X^r}\mathop{\mathbb{E}}_{\mathbf{w'}}\mathop{\mathbb{E}}_{\mathbf{z}^r}\;{\boldsymbol\theta^\star}^T\Sigma\;\boldsymbol\theta^\star=\mbox{Tr}\left\{\boldsymbol\omega^\star{\boldsymbol\omega^\star}^T\left[\Sigma\left(I-\alpha_r\Sigma\right)^2+\frac{\alpha_r^2}{n_r}\left(F-\Sigma^3\right)\right]\right\}+\\
&+\frac{\alpha_r^2\sigma^2}{n_r}\mbox{Tr}\left(\Sigma^2\right)+\alpha_r^2\mbox{Tr}\left\{\Sigma_w\left[\Sigma^3+\frac{1}{n_r}\left(F-\Sigma^3\right)\right]\right\}
\end{align}
We can now rewrite the average test loss in Eq.\ref{losstest3corr} as
\begin{equation}
\label{losstest4corr}
\overline{\mathcal{L}}^{test}=\frac{\sigma^2}{2}\left[1+\frac{\alpha_r^2}{n_r}\mbox{Tr}\left(\Sigma^2\right)\right]+\frac{1}{2}\mbox{Tr}\left[\left(\Sigma_w+\mathbb{E}\;\boldsymbol\omega^\star{\boldsymbol\omega^\star}^T\right)H^r\right]
\end{equation}
where we define the following matrix
\begin{equation}
\label{hr}
H^r=\left[\Sigma\left(I-\alpha_r\Sigma\right)^2+\frac{\alpha_r^2}{n_r}\left(F-\Sigma^3\right)\right]
\end{equation}
In order to average the last term, we need an expression for $\boldsymbol\omega^\star$.
We note that the loss in Eq.\ref{lossmeta} is quadratic in $\boldsymbol\omega$, therefore the solution in Eq.\ref{omegastar} can be found using standard linear algebra.
In particular, the loss in Eq.\ref{lossmeta} can be rewritten as
\begin{equation}
\label{lossmetashortcorr}
\mathcal{L}^{meta}=\frac{1}{2n_vm}\left|\boldsymbol\gamma-B\boldsymbol\omega\right|^2
\end{equation}
where $\boldsymbol\gamma$ is a vector of shape $n_vm\times 1$, and $B$ is a matrix of shape $n_vm\times p$.
The vector $\boldsymbol\gamma$ is a stack of $m$ vectors
\begin{equation}
\boldsymbol\gamma=\left(
\begin{matrix}X^{v(1)}\left(I-\frac{\alpha_t}{n_t}{X^{t(1)}}^TX^{t(1)}\right)\mathbf{w}^{(1)}-\frac{\alpha_t}{n_t}X^{v(1)}{X^{t(1)}}^T\mathbf{z}^{t(1)}+\mathbf{z}^{v(1)}\\ \vdots\\X^{v(m)}\left(I-\frac{\alpha_t}{n_t}{X^{t(m)}}^TX^{t(m)}\right)\mathbf{w}^{(m)}-\frac{\alpha_t}{n_t}X^{v(m)}{X^{t(m)}}^T\mathbf{z}^{t(m)}+\mathbf{z}^{v(m)}
\end{matrix}\right)
\end{equation}
Similarly, the matrix $B$ is a stack of $m$ matrices
\begin{equation}
\label{Bmatrixcorr}
B=\left(
\begin{matrix}X^{v(1)}\left(I-\frac{\alpha_t}{n_t}{X^{t(1)}}^TX^{t(1)}\right)\\ \vdots\\X^{v(m)}\left(I-\frac{\alpha_t}{n_t}{X^{t(m)}}^TX^{t(m)}\right)
\end{matrix}\right)
\end{equation}
In the overparameterized case ($p>n_vm$), under the assumption that the inverse of $BB^T$ exists, the value of $\boldsymbol\omega$ that minimizes Eq.\ref{lossmetashortcorr}, and that also has minimum norm, is equal to
\begin{equation}
\label{omegastar_opcorr}
\boldsymbol\omega^\star=B^T\left(BB^T\right)^{-1}\boldsymbol\gamma
\end{equation}
Note that the matrix $B$ does not depend on $\mathbf{w}, \mathbf{z}^t, \mathbf{z}^v$, and
$\mathop{\mathbb{E}}_{\mathbf{w}}\mathop{\mathbb{E}}_{\mathbf{z}^t}\mathop{\mathbb{E}}_{\mathbf{z}^v}\;\boldsymbol\gamma=0$, therefore Eq.\ref{Eomegastarcorr} holds.
In order to finish calculating Eq.\ref{losstest4corr}, we need to average the following term
\begin{equation}
\mbox{Tr}\left(H^r\boldsymbol\omega^\star{\boldsymbol\omega^\star}^T\right)=\mbox{Tr}\left[\left(BB^T\right)^{-1}\boldsymbol\gamma\boldsymbol\gamma^T\left(BB^T\right)^{-1}\left(BH^rB^T\right)\right]
\end{equation}
where we used the cyclic property of the trace.
We start by averaging $\boldsymbol\gamma\boldsymbol\gamma^T$ over $\mathbf{w}, \mathbf{z}^t, \mathbf{z}^v$, since $B$ does not depend on those variables.
Note that $\mathbf{w}, \mathbf{z}^t, \mathbf{z}^v$ are independent on each other and across tasks.
We denote by $\Gamma$ the result of this operation, which is equal to a block diagonal matrix
\begin{equation}
\label{Gamma1corr}
\Gamma=\mathop{\mathbb{E}}_{\mathbf{w}}\mathop{\mathbb{E}}_{\mathbf{z}^t}\mathop{\mathbb{E}}_{\mathbf{z}^v}\boldsymbol\gamma\boldsymbol\gamma^T=\left(
\begin{matrix}
\Gamma^{(1)}&0&0\\0&\ddots&0\\0&0&\Gamma^{(m)}
\end{matrix}
\right)
\end{equation}
Where matrix blocks are given by the following expression
\begin{align}
\label{Gamma2corr}
&\Gamma^{(i)}=X^{v(i)}\left(I-\frac{\alpha_t}{n_t}{X^{t(i)}}^TX^{t(i)}\right)\Sigma_w \left(I-\frac{\alpha_t}{n_t}{X^{t(i)}}^TX^{t(i)}\right){X^{v(i)}}^T+\\
&+\sigma^2\left(I_{n_v}+\frac{\alpha_t^2}{n_t^2}X^{v(i)}{X^{t(i)}}^TX^{t(i)}{X^{v(i)}}^T\right)
\end{align}
Finally, we need to average over the training and validation data
\begin{equation}
\label{avgomegastar_opcorr}
\mathbb{E}\;\mbox{Tr}\left(H^r\boldsymbol\omega^\star{\boldsymbol\omega^\star}^T\right)=\mathop{\mathbb{E}}_{X^t}\mathop{\mathbb{E}}_{X^v}\mbox{Tr}\left[\left(BB^T\right)^{-1}\Gamma\left(BB^T\right)^{-1}\left(BH^rB^T\right)\right]
\end{equation}
These averages are hard to compute since they involve nonlinear functions of the data.
However, we can approximate these terms by assuming that $p$ and $n_t$ are large, both of order $O(\xi)$, where $\xi$ is a large number.
Furthermore, we assume that $\mbox{Tr}\left(\Sigma_w^2\right)$ is of order $O\left(\xi^{-1}\right)$, and that the variances of matrix products of the rescaled inputs $\mathbf{x}/\sqrt{p}$, up to sixth order, are all of order $O\left(\xi^{-1}\right)$, in particular
\begin{align}
\label{corrass1}
&\mbox{Var}\left(\frac{1}{p}X^{v(i)}{X^{v(j)}}^T\right)=O\left(\xi^{-1}\right)\\
\label{corrass2}
&\mbox{Var}\left(\frac{1}{p^2}X^{v(i)}{X^{t(i)}}^TX^{t(i)}{X^{v(j)}}^T\right)=O\left(\xi^{-1}\right)\\
\label{corrass3}
&\mbox{Var}\left(\frac{1}{p^3}X^{v(i)}{X^{t(i)}}^TX^{t(i)}{X^{t(j)}}^TX^{t(j)}{X^{v(j)}}^T\right)=O\left(\xi^{-1}\right)
\end{align}
Then, using Eqs.\ref{1ordcorr}, \ref{2ordcorr2} and the expressions of $B$ (Eq.\ref{Bmatrixcorr}) and $\Gamma$ (Eqs.\ref{Gamma1corr},\ref{Gamma2corr}), we can prove that
\begin{equation}
BB^T=\mbox{Tr}\left(H^t\right)I_{n_vm}+O\left(\xi^{1/2}\right)
\end{equation}
\begin{equation}
\Gamma=\left\{\mbox{Tr}\left(\Sigma_wH^t\right)+\sigma^2\left[1+\frac{\alpha_t^2}{n_t}\mbox{Tr}\left(\Sigma^2\right)\right]\right\}I_{n_vm}+O\left(\xi^{1/2}\right)
\end{equation}
\begin{equation}
BH^rB^T=\mbox{Tr}\left(H^rH^t\right)I_{n_vm}++O\left(\xi^{1/2}\right)
\end{equation}
where, similar to Eq.\ref{hr}, we define
\begin{equation}
\label{ht}
H^t=\left[\Sigma\left(I-\alpha_t\Sigma\right)^2+\frac{\alpha_t^2}{n_t}\left(F-\Sigma^3\right)\right]
\end{equation}
Note that all these terms are of order $O\left(\xi\right)$.
The inverse of $BB^T$ can be found by a Taylor expansion
\begin{equation}
\left(BB^T\right)^{-1}=\mbox{Tr}\left(H^t\right)^{-1}I_{n_vm}+O\left(\xi^{-3/2}\right)
\end{equation}
Substituting these expressions in Eq.\ref{avgomegastar_opcorr}, we find
\begin{equation}
\mathbb{E}\;\mbox{Tr}\left(H^r\boldsymbol\omega^\star{\boldsymbol\omega^\star}^T\right)= n_vm\frac{\mbox{Tr}\left(H^rH^t\right)\left\{\mbox{Tr}\left(\Sigma_wH^t\right)+\sigma^2\left[1+\frac{\alpha_t^2}{n_t}\mbox{Tr}\left(\Sigma^2\right)\right]\right\}}{\mbox{Tr}\left(H^t\right)^2}+O\left(\xi^{-3/2}\right)
\end{equation}
Substituting this expression into in Eq.\ref{losstest4corr}, we find the value of average test loss
\begin{align}
&\overline{\mathcal{L}}^{test}=\frac{1}{2}\mbox{Tr}\left(\Sigma_wH^r\right)+\frac{\sigma^2}{2}\left[1+\frac{\alpha_r^2}{n_r}\mbox{Tr}\left(\Sigma^2\right)\right]+\\
&+\frac{1}{2}n_vm\frac{\mbox{Tr}\left(H^rH^t\right)\left\{\mbox{Tr}\left(\Sigma_wH^t\right)+\sigma^2\left[1+\frac{\alpha_t^2}{n_t}\mbox{Tr}\left(\Sigma^2\right)\right]\right\}}{\mbox{Tr}\left(H^t\right)^2}+O\left(\xi^{-3/2}\right)
\end{align}

\end{document}